\newcommand{\cmark}{\textcolor{ForestGreen}{\ding{51}}}
\newcommand{\xmark}{\textcolor{BrickRed}{\ding{55}}}
\theoremstyle{plain}
\newtheorem{theorem}{Theorem}[section]
\newtheorem{lemma}[theorem]{Lemma}
\newtheorem{corollary}[theorem]{Corollary}
\theoremstyle{definition}
\newtheorem{assumption}[theorem]{Assumption}
\theoremstyle{remark}
\newtheorem{remark}[theorem]{Remark}
\DeclareMathOperator*{\argmin}{arg\,min}
\DeclareMathOperator*{\argmax}{arg\,max}
\newcommand{\indep}{\perp \!\!\! \perp}
\newcommand{\frameworkshort}{\textsf{\small TCA}\xspace}
\newcommand{\frameworklong}{\textbf{\underline{t}}ext \textbf{\underline{c}}onfounding \textbf{\underline{a}}djustment\xspace}
\newcommand{\oursetting}{inference time text confounding\xspace}
\newcommand{\LATER}[1]{}
\newcommand{\bluecolor}[1]{{\color[RGB]{65,105,225}#1}}
\newcommand{\greencolor}[1]{{\color[RGB]{60,179,60}#1}}
\newcommand*\circled[1]{%
\tikz[baseline=(char.base)]{
  \node[shape=circle, draw=Purple!60, fill=Purple!10, thick, inner sep=1pt] (char) {\scriptsize\textsf{#1}};
}}
\title{LLM-Driven Treatment Effect Estimation Under Inference Time Text Confounding}
  \author{
    Yuchen Ma, Dennis Frauen, Jonas Schweisthal \& Stefan Feuerriegel \\
    Munich Center for Machine Learning \\
    LMU Munich \\
    \texttt{yuchen.ma@lmu.de} \\
}
\begin{document}

\maketitle

\begin{abstract}
Estimating treatment effects is crucial for personalized decision-making in medicine, but this task faces unique challenges in clinical practice. At training time, models for estimating treatment effects are typically trained on well-structured medical datasets that contain detailed patient information. However, at inference time, predictions are often made using textual descriptions (e.g., descriptions with self-reported symptoms), which are incomplete representations of the original patient information. In this work, we make three contributions. (1)~We show that the discrepancy between the data available during training time and inference time can lead to biased estimates of treatment effects. We formalize this issue as an \emph{inference time text confounding} problem, where confounders are fully observed during training time but only partially available through text at inference time. (2)~To address this problem, we propose a novel framework for estimating treatment effects that explicitly accounts for inference time text confounding. Our framework leverages large language models (LLMs) together with a custom doubly robust learner to mitigate biases caused by the inference time text confounding. (3)~Through a series of experiments, we demonstrate the effectiveness of our framework in real-world applications.
\end{abstract}

\section{Introduction}
\label{sec:intro}

Estimating the \emph{conditional average treatment effect (CATE)} is crucial for personalized medicine \cite{feuerriegel2024causal}. A growing number of machine learning methods have been developed for this purpose \cite[e.g.,][]{chipman2010bart,curth2021nonparametric,curth2021inductive,johansson2016learning,johansson2018learning,kennedy2023towards,kunzel2019metalearners,ma2024diffpo,ma2025foundation,nie2021quasi,shalit2017estimating,wager2018estimation}. When estimating CATEs from observational data, it is necessary to account for confounders -- i.e., variables that influence both treatment and outcome. If confounders are not properly adjusted for, the estimated treatment effects may \emph{not} reflect the true causal effect and are thus \emph{biased} \cite{Pearl2009}.

The standard setting for CATE estimation typically assumes that the \emph{same} set of confounders is observed at training time and at inference time \cite[e.g.,][]{kunzel2019metalearners, shalit2017estimating}. However, this assumption is often violated in real-world clinical practice. At \textbf{training time}, comprehensive information about confounders is typically available via well-curated medical datasets, such as clinical registries or trial data, where structured patient records ensure that confounders are systematically recorded. At \textbf{inference time}, however, information about confounders may be only partially observed or entirely missing due to differences in data collection methods, resource constraints, or changes in clinical workflow. In other words, there is a discrepancy: at inference time, the CATE is often predicted from incomplete representations of the original patient information such as textual descriptions with self-reported symptoms. 

\begin{wrapfigure}{r}{0.55\columnwidth} 
\vspace{-0.5cm}
\includegraphics[width=0.55\columnwidth]{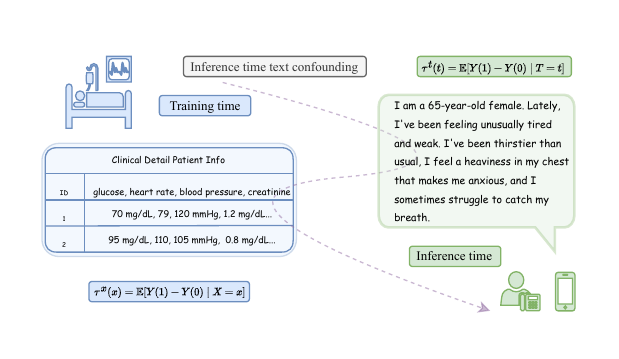}
\vspace{-0.5cm}
\caption{\textbf{Discrepancy between training time and inference time.} At \bluecolor{training time}, models for estimating treatment effects are typically trained on well-structured medical datasets that contain detailed patient information. However, at \greencolor{inference time} (e.g., in telemedicine, remote healthcare consultations, or medical chatbots), predictions are often made using textual descriptions with self-reported symptoms. We formalize this discrepancy as \emph{inference time text confounding}, where confounders are fully observed during training time but only partially available through text at inference time.}
\vspace{-0.1cm}
\label{fig:motivation}
\end{wrapfigure}

\textbf{Example (see \Cref{fig:motivation}):} Consider a health insurance provider developing a medical chatbot to assist with treatment recommendations \cite{de2022design, xu2021chatbot}. The chatbot is trained on high-quality clinical datasets containing detailed patient histories, laboratory results, and physician assessments to ensure reliable predictions of treatment effects. However, at inference time, the chatbot relies primarily on free-text inputs from patients describing their symptoms, often without supporting diagnostic tests or clinical measurements. A similar challenge arises in emergency settings, such as during triage calls or ambulance dispatches, where only verbal symptom descriptions or brief text messages are available. 

The above introduces a \emph{discrepancy} between training time and test time: \emph{confounders that were fully observed at training time are only partially available through unstructured text at inference time}, which can lead to biased treatment effect estimates. We later formalize this discrepancy as \emph{\textbf{inference time text confounding}}. In other words, if inference time text confounding is \emph{not} properly addressed, any treatment effects estimated in the above medical example would be \emph{biased} and could potentially lead to harmful treatment decisions. To address this, we further propose a novel framework to compute unbiased estimates of the CATE, \emph{even} in the presence of inference time text confounding. 

Interestingly, the above setting involving inference time text confounding has not yet been studied. So far, standard methods for CATE estimation \cite[e.g.,][]{kunzel2019metalearners, shalit2017estimating} require \emph{identical} sets with confounders at both training time and inference and are therefore \emph{not} applicable. Further, these methods focus on confounders with structured data, not text data.  Other works focus on settings with confounding due to text data \cite[e.g.,][]{chen2024proximal, deshpande2022deep,  roberts2020adjusting, veitch2020adapting}, yet these works again assume that confounders with text data are available during training and are therefore also \emph{not} applicable.

To fill the above gap, we propose a novel framework for \emph{\textbf{unbiased CATE estimation in the presence of inference time text confounding}}. We call our framework \frameworkshort (short for \frameworklong). Thereby, we aim to bridge the gap between advanced treatment effect estimation methods for text data and real-world medical constraints. Our \frameworkshort framework operates in three stages: \circled{1}~estimate nuisance functions and construct pseudo-outcome based on the true confounders; \circled{2}~generate surrogates of the text confounders by leveraging state-of-the-art large language models (LLMs); and \circled{3}~perform a doubly robust text-conditioned regression to obtain \emph{valid} and \emph{unbiased} CATE estimates. Crucially, we use LLMs \emph{not} as causal reasoners -- given their documented limitations in reliable inference \cite{jin2023can} -- but as a semantically rich text generator. 

Intuitively, our \frameworkshort framework decouples treatment effect estimation (using true confounders) from inference time adjustments (using induced text confounders), which allows us to overcome the non-identifiability of CATE from text data alone. By conditioning on the generated text during training time, our framework learns to map the induced text confounders onto treatment effects, so that we can successfully remove the bias from having partially observed confounders at inference time. Further, our \frameworkshort is doubly robust and thus has favorable theoretical properties such as being robust to misspecification in the nuisance functions. 

Overall, our \textbf{main contributions} are the following: \footnote{Code is available at \url{https://github.com/yccm/llm-tca}} (1)~We formalize the problem of \oursetting, which is common in real-world medical applications such as telemedicine, where full information about confounders is missing at inference time. (2)~We develop a novel framework called \frameworkshort to adjust for \oursetting and provide unbiased treatment effect estimations. (3)~We demonstrate the effectiveness and real-world applicability of our framework across extensive experiments. 

\section{Related work}
\label{sec:related_work}

In the following, we focus on key literature streams relevant to our paper: (i)~CATE estimation, (ii)~treatment effect estimation from text, (iii)~NLP for clinical decision support, and (iv)~LLMs for causal inference. We provide an extended related work in Appendix~\ref{app:rw}.

\textbf{CATE estimation:} Numerous methods have been proposed for estimating the CATE \cite[e.g.,][]{johansson2016learning,johansson2018learning,kennedy2023towards,kunzel2019metalearners,nie2021quasi,shalit2017estimating,wager2018estimation}. On the one hand, there are works that suggest specific model architectures to address the covariate shift between treated and control groups by learning balanced representations \cite{johansson2016learning,shalit2017estimating}. However, such specialized model architectures are not designed to handle unstructured data such as text. On the one hand, \emph{meta-learners} offer a more flexible approach for constructing CATE estimators that can incorporate arbitrary machine learning models (e.g., neural networks) \cite{curth2021inductive,kunzel2019metalearners}. Common meta-learners include the S-learner \cite{kunzel2019metalearners}, T-learner \cite{kunzel2019metalearners}, and the DR-learner \cite{kennedy2023towards}. Nevertheless, it is unclear how best to adapt meta-learners for complex settings such as those involving text data.

Of note, \underline{all} of the above works operate in the \emph{standard} CATE setting, where an \emph{identical} set of confounders is observed at both training and inference time. However, if the confounding variables are different (e.g., if some confounders are only partially observed or hidden at inference time), standard methods will yield \emph{biased} estimates that no longer reflect the true treatment effect \cite{Pearl2009}. Motivated by this issue, we focus on a specific setting that is characterized by such discrepancy where confounders were fully observed at training time but are only partially available through unstructured text data at inference time.

\textbf{Treatment effect estimation with text data:} A growing body of research has investigated causal inference with text data \cite[e.g.,][]{chen2024proximal, daoud2022conceptualizing, deshpande2022deep, egami2022make, gultchin2021operationalizing, keith2020text, keith2021text, roberts2020adjusting,  maiya2021causalnlp, mozer2020matching,  veitch2020adapting, weld2022adjusting, zhang2020quantifying, zeng2022uncovering}). Therein, text data can have various roles: (a)~text data can be \emph{treatment variable} \cite[e.g.,][]{egami2022make,  maiya2021causalnlp, pryzant2021causal, tan2014effect,  wang2019words, wood2018challenges}; (b)~text data can be a \emph{mediator} \cite[e.g.,][]{gultchin2021operationalizing, keith2021text, veitch2020adapting, zhang2020quantifying}); and (c)~text data can be an \emph{outcome variable} \cite[e.g.,][]{gill2015judicial, koroleva2019measuring, sridhar2019estimating}). In contrast to that, our setting considers text data as a \emph{confounder}. 

Several works have also explored (d) text data as a \emph{confounder} \cite[e.g.,][]{chen2024proximal, daoud2022conceptualizing, keith2020text, mozer2020matching, roberts2020adjusting,   weld2022adjusting,zeng2022uncovering}). To adjust for text confounding, recent methods typically attempt to mitigate the underlying confounding bias by viewing text features as proxies for unobserved confounders \cite{chen2024proximal, deshpande2022deep,roberts2020adjusting,veitch2020adapting}. 

However, these works are \emph{orthogonal} to our setting. In the standard proxy-based literature, the confounding variable is {unobserved}, but proxies are assumed to be \emph{available} \emph{at both training and inference}. In contrast, our setting  introduces a unique challenge due to \oursetting: the proxy (=\,text data) 
is \emph{unavailable} \emph{during training} but are only available at inference time. Hence, proxy-based approaches are \underline{not} applicable to our setting. See the detailed comparison in Appendix ~\ref{app:all_datasets}.

\textbf{NLP for clinical decision support:} Natural language processing (NLP) has become a useful tool for clinical decision support, such as by enabling to extract information from unstructured clinical texts \cite[e.g.,][]{chen2020trends, demner2009can, huang2019clinicalbert, zhu2020using, wu2020deep, salunkhe2021machine, hiremath2022enhancing, hossain2023natural,hossain2024natural}). Early work focused on using NLP to process clinical notes and symptom descriptions for some basic analysis and classification tasks, while more recent approaches leverage deep learning for learning the representations, particularly transformer-based models like BERT and GPT \cite{brown2020language, devlin2018bert, dhawan2024end, huang2019clinicalbert,   lahat2024assessing}. In a similar vein, LLMs have shown promise in medical applications, such as clinical text summarization, diagnosis, and treatment recommendations \cite{dhawan2024end, nazi2024large, spitzer2025effect,thirunavukarasu2023large,ullah2024challenges}. 

However, most NLP models -- including LLMs -- are designed for \emph{associative} learning rather than \emph{causal} reasoning \cite{jin2023can, jin2023cladder}. Hence, LLM fails in addressing text confounding and would thus give \emph{biased} estimates. This is unlike our \frameworkshort framework where we aim to give \emph{unbiased} estimates.

\textbf{LLMs for causal inference:} Recent works have explored the capabilities of LLMs in causal reasoning across various causal tasks \cite[e.g.,][]{guo2024estimating, jin2023can, jin2023cladder, kiciman2023causal, zevcevic2023causal, zhang2023understanding}. For example, \cite{dhawan2024end} leverages LLMs to impute missing data for causal inference. However, recent works by \cite{ jin2023can,jin2023cladder} demonstrate that LLMs fail to reliably reason about causality, which raises concerns given the need for reliable inferences, particularly in medicine \cite{Kern2025}, where flawed causal reasoning of LLM can lead to harmful or even dangerous decisions. 

In sum, the above works are \emph{orthogonal} to our work. We are \underline{not} aiming to use LLMs for causal reasoning, given their limitations in reliable inference. Instead, we leverage LLMs as an \emph{auxiliary tool}: a semantically rich text generator in an intermediate step of our framework.

\begin{wraptable}{r}{0.5\textwidth}
  \centering
  \caption{Comparison of confounder availability across different settings. $X$: true confounder; $T$: induced text confounder.}
  \label{tab:confounder-availability_compare}
  \resizebox{0.5\textwidth}{!}{%
    \begin{tabular}{lcc|cc}
      \toprule
      & \multicolumn{2}{c}{\textbf{Training time}} & \multicolumn{2}{c}{\textbf{Inference time}} \\
      \cmidrule(lr){2-3} \cmidrule(lr){4-5}
      \textbf{Setting}      & \textbf{$X$} & \textbf{$T$} & \textbf{$X$} & \textbf{$T$} \\
      \midrule
      Standard CATE setting       & \cmark      & \xmark      & \cmark      & \xmark      \\
      Proxy-based setting   & \xmark      & \cmark      & \xmark      & \cmark      \\
      \midrule
      Our novel setting     & \cmark      & \xmark      & \xmark      & \cmark      \\
      \bottomrule
      \multicolumn{3}{l}{\xmark: Not available; \cmark: Available.}
    \end{tabular}%
  }
  \vspace{-0.2cm}
\end{wraptable}

\textbf{Differences to other settings:} The main difference between our setting and other settings lies in the availability and role of confounders during training and inference time, see Table~\ref{tab:confounder-availability_compare}. 

(i)~\emph{Comparison with the standard CATE estimation setting:} Unlike the standard CATE settings \cite[e.g.,][]{kunzel2019metalearners,shalit2017estimating}, where the same set of confounders is observed at both training and inference, our setting involves a true confounder $X$ that is observed during training but \emph{not} inference (while some induced text confounder $T$ is \emph{un}observed during training but observed at inference). (ii)~\emph{Comparison with the proxy-based setting:} In the proxy-based setting \cite{chen2024proximal, deshpande2022deep, roberts2020adjusting, veitch2020adapting}, one views text $T$ as a noisy proxy of true confounder $X$, which implies that the true confounders $X$ should be assumed latent and are \emph{never} directly observed, while the proxy $T$ should \emph{always} be observed. By contrast, in our setting, $T$ is \emph{not} a proxy but an induced variable from $X$, because $T$ encodes partial information of the confounders. $T$ is \emph{unobserved} during training but \emph{fully observed} during inference, whereas $X$ is \emph{observed} during training but \emph{unobserved} during inference, which is \emph{\textbf{opposite}} to the proxy setting.

\textbf{Research gap:} We focus on CATE estimation for clinical decision support where confounders are only partially available through text at inference time. To the best of our knowledge, we are the first to formalize the underlying issue as \oursetting and propose a novel framework tailored to give unbiased CATE estimation.

\section{Problem setup}
\label{sec:prob_setup}

\textbf{Setting:} We consider a treatment $A \in \mathcal{A} = \{0,1\}$ (e.g., an anti-hypertension drug) and the outcome of interest $Y \in \mathcal{Y} \subseteq \mathbb{R}$ (e.g., cardiovascular events), for which we want to estimate the treatment effect. In our setting, we further consider the following discrepancy between training time and inference time:

\bluecolor{\textbf{Training time:}} Here, we observe confounders $X \in$ $\mathcal{X} \subseteq \mathbb{R}^{d_x}$. For example, this could be well-curated information from electronic health records such as blood pressure readings, cholesterol levels, and kidney function tests. During training, we thus have access to an observational dataset $\mathcal{D}_{X}=\left(x_i, a_i, y_i\right)_{i=1}^n$ sampled i.i.d. from the unknown joint distribution $\mathbb{P}(X,T,A,Y)$. 

\greencolor{\textbf{Inference time:}} Here, we do \underline{not} observe the confounders $X$. Rather, we observe some textual representation (e.g., a self-description of symptoms such as ``I feel dizzy'' or ``I have headaches'') but without access to laboratory results or prior prescriptions. Formally, we observe some induced text confounders $T$, while the true confounders $X$ are unavailable. The induced text confounders $T \in \mathcal{T} \subseteq \mathbb{R}^{d_t}$ are generated by $X$, which are \emph{unobserved} during training but \emph{observed} at inference time. The test dataset is $\mathcal{D}_{T}=\left(t_i, a_i, y_i\right)_{i=1}^n$ and also sampled i.i.d. from the same joint distribution $\mathbb{P}(X,T,A,Y)$.

The underlying causal graph for our setting is shown in Fig.~\ref{fig:causal_graph}. We refer to the above discrepancy as \emph{\oursetting}.

\textbf{Clinical relevance:} Our setting is highly relevant in medical practice. One application is telemedicine \cite{haleem2021telemedicine} where CATE models are trained on well-curated medical datasets (e.g., trial data) containing comprehensive confounder information (where $X$ includes patient characteristics and detailed diagnostic features, different risk factors, e.g., age, gender, prior diseases). Hence, at training time, detailed information about confounders is available. In contrast, at inference time, detailed patient information is unavailable, simply because diagnostic facilities are not available in telemedicine. Hence, at inference time, the CATE model can only be applied to symptom descriptions from patients (where $T$ provides partial information about the patient characteristics and diagnostic features in $X$).

\textbf{Notation:} We use capital letters to denote random variables and small letters for their realizations from corresponding spaces. We denote the propensity score by $\pi_a^x(x)=\mathrm{Pr}(A=a \mid X=x)$, which is the treatment assignment mechanism in the observational data. We denote the response surfaces via $\mu_a(x)=\mathbb{E}[Y \mid X=x, A=a]$. Importantly, we later use superscripts to distinguish between training/inference: we use $\pi_a^x$, $\mu_a^x$ to refer to nuisance functions related to the true confounder $X$ and $\pi_a^t$, $\mu_a^t$ for nuisance functions related to the induced text confounder $T$. 

\begin{wrapfigure}{r}{0.5\columnwidth} 
\vspace{-0.6cm}
\includegraphics[width=0.5\columnwidth]{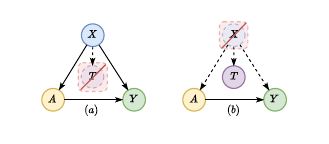}
\vspace{-0.6cm}
\caption{\textbf{Causal graph for \oursetting.} (a)~At \emph{training time}, we have access to the true confounders $X$ but the induced test confounders $T$ are unobserved. (b)~At \emph{inference time}, the induced text confounders $T$ are observed, while the true confounders $X$ are unavailable.}
\vspace{-0.5cm}
\label{fig:causal_graph}
\end{wrapfigure}

\textbf{Potential outcomes framework:} We build upon Neyman-Rubin potential outcomes framework \cite{rubin2005causal}.  Hence, $Y(a) \in \mathcal{Y}$ denotes the {potential outcome} for a treatment intervention $A=a$. We have two potential outcomes for each individual: $Y(1)$ if the treated (i.e., $A=1$), and $Y(0)$ if not treated (i.e., $A=0$). However, due to the {fundamental problem} of causal inference \cite{holland1986statistics}, only one of the potential outcomes is observed. Hence, $Y=A Y(1)+\left(1-A\right) Y(0)$. The \emph{conditional average treatment effect} (CATE) with respect to confounder $X$ is defined as $\tau(x)=\mathbb{E}[Y(1)-Y(0) \mid X=x]$, which is the expected treatment effect for an individual with covariate value $X=x$. 

We follow previous literature \cite[e.g.,][]{curth2021nonparametric, kennedy2023towards,shalit2017estimating, wager2018estimation} to make the following standard assumptions to ensure the {identifiablity} of the CATE from observational data. 
\begin{assumption}
\label{ass:basic}
(i)~Consistency: if $A=a$, then $Y=Y(a)$; (ii)~Unconfoundedness: $Y(0), Y(1)\perp A \mid X$; (iii)~Overlap: $\mathrm{Pr}\left(0<\pi_a^x(x)<1\right)=1$.
\end{assumption}
\vspace{-0.2cm}
Of note, the above assumptions are standard in causality \cite{Pearl2009}. Consistency ensures that the observed outcome $Y$ aligns with the potential outcome $Y(a)$ under the assigned treatment $A=a$. Unconfoundedness implies there are no unobserved confounders, i.e., all factors influencing both the treatment and the outcome are captured in the observed covariates $X$. Overlap guarantees that the treatment assignment is non-deterministic, i.e., that every individual has a non-zero probability of receiving each treatment option. 

In practice, it is natural to view the text as a noisy `measurement' of the true confounder. We thus formalize the induced text confounder $T$ generated from $X$.
\begin{assumption}[Text generation mechanism]
\label{ass:text_gen}
There exists a measurable function 
$h: \mathcal{X} \times \mathcal{E} \rightarrow \mathcal{T}$, and a random variable $\epsilon$ with support $\mathcal{E}$, such that the induced text confounder $T$ is generated from the true confounder $X\in\mathcal{X}$ as $T = h(X, \epsilon)$, where the noise variable satisfies $\epsilon \indep Y(a) \mid X$, i.e., the noise is independent of the potential outcomes conditional on the true confounder $X$.
\end{assumption}

\textbf{Objective:} For ease of readability, we use different colors to distinguish the CATE with respect to true confounder $X$, {$\tau^x(x)$}, and the CATE with respect to induced text confounder $T$, {$\tau^t(t)$}. In the standard setting, $\tau^x(x)$ is often of interest. However, here, our target is ${\tau^t(t)}$. At inference time, we focus on estimating ${\tau^t(t)}=\mathbb{E}[Y(1)-Y(0) \mid T=t]$, which is the expected treatment effect for an individual with covariate value $T=t$.

\section{Our novel \frameworkshort framework to adjust for \oursetting}

\label{sec:framework}

\textbf{Overview:} In this section, we introduce our \frameworklong (\frameworkshort) framework for addressing \oursetting. We first discuss the challenges of CATE estimation in \oursetting (Sec.~\ref{sec:motivation_bias}), where we demonstrate the limitation of na{\"i}ve methods due to estimation bias and thereby motivate the need for a novel method to achieve unbiased estimation. We then present our text generation (Sec.~\ref{sec:text_gen}) and doubly-robust CATE estimation procedure (Sec.~\ref{sec:dr-estimation}). Our full \frameworkshort framework is given in Sec.~\ref{sec:framework-overview}.

\subsection{Adjusting for \oursetting}
\label{sec:motivation_bias}

\emph{Why is CATE estimation non-trivial in our setting?} Estimating the CATE in \oursetting is very challenging. Recall that the CATE with respect to true confounder $X$, {$\tau^x(x)$}, and the CATE with respect to induced text confounder $T$, {$\tau^t(t)$}, are different. At training time, under Assumption~\ref{ass:basic}, the CATE $\tau^x(x)$ can be directly identified from observational data $\mathcal{D}_{X}=\left(x_i, a_i, y_i\right)_{i=1}^n$ as $\tau^x(x) = \mu^x_1(x)-\mu^x_0(x)$. However, \emph{the target estimand} at inference time, $\tau^t(t)$, is \emph{non-identifiable} from observational test data $\mathcal{D}_{T}=\left(t_j, a_j, y_j\right)_{j=1}^m$ through $T$ alone.

\emph{How do na{\"i}ve baselines work?} 
A na{\"i}ve baseline would attempt to directly estimate $\tau^t(t)$ from $\mathcal{D}_T$. Assuming that $T$ would be observed during training, a model is trained to learn the response surface with respect to $\mu_a^t(t)$ that gives $\mathbb{E}[Y \mid A=a, T=t]$. Here, a na{\"i}ve baseline directly estimates $\tau^t_{\text {naive }}(t)=\mathbb{E}[Y \mid A=1, T=t]-\mathbb{E}[Y \mid A=0, T=t]$.

\emph{Why do na{\"i}ve baselines fail?} A na{\"i}ve method estimates $\tau^t_{\mathrm{naive}}(t)$ which is not equal to the true CATE $\tau^t(t)$. The induced text confounder $T$ generated from $X$ often does not capture all the necessary confounding information in $X$ for potential outcomes. As a result, there exists confounding information in the \emph{residual confounders} $X \backslash T$ that affects both $A$ and $Y$, which are unobserved in $\mathcal{D}_T$. Due to the residual confounding, we have $\mathbb{E}[Y \mid A=a, T=t] \neq \mathbb{E}[Y(a) \mid T=t]$. Below, we state the pointwise confounding bias of the na{\"i}ve baseline following \cite{coston2020counterfactual}.

\begin{lemma}[Pointwise confounding bias of the na{\"i}ve baseline]
For any $t \in \mathcal{T}$, under Assumption~\ref{ass:basic}, the na{\"i}ve baseline estimating $\tau^t_{\mathrm{naive}}(t)$ has pointwise confounding bias with respect to the true CATE $\tau^t(t)$, given by
\vspace{-0.4cm}
\begin{equation}
    \begin{aligned}
\mathrm{bias}(t) &\;=\; \tau^t_{\mathrm{naive}}(t)-\tau^t(t) \\
&= \Bigl(\mathbb{E}\bigl[\mu_1^x(X) \mid A=1, T=t\bigr]-\mathbb{E}\bigl[\mu_1^x(X) \mid T=t\bigr]\Bigr) \\
& - \Bigl(\mathbb{E}\bigl[\mu_0^x(X) \mid A=0, T=t\bigr]-\mathbb{E}\bigl[\mu_0^x(X) \mid T=t\bigr]\Bigr).
\end{aligned}
\end{equation}
\end{lemma}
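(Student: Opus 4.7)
The plan is to expand both $\tau^t_{\mathrm{naive}}(t)$ and $\tau^t(t)$ by conditioning inside on the true confounder $X$ via the tower property, then recognise that each inner expectation reduces to the response surface $\mu_a^x(X)$ once Assumptions~\ref{ass:basic} and~\ref{ass:text_gen} are invoked. Subtracting the two resulting expressions term by term should immediately yield the stated two-part bias.

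First, I would tackle the naïve term. Writing $\mathbb{E}[Y \mid A=a, T=t] = \mathbb{E}\bigl[\mathbb{E}[Y \mid X, A=a, T=t] \mid A=a, T=t\bigr]$, the inner conditional expectation must be simplified. Since $T = h(X, \epsilon)$ and, by Assumption~\ref{ass:text_gen}, $\epsilon \indep Y(a) \mid X$, consistency gives $Y \indep T \mid (X, A)$, so the inner expectation collapses to $\mathbb{E}[Y \mid X, A=a] = \mu_a^x(X)$. Hence
\begin{equation*}
\mathbb{E}[Y \mid A=a, T=t] \;=\; \mathbb{E}\bigl[\mu_a^x(X) \mid A=a, T=t\bigr].
\end{equation*}

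Next I would handle the true CATE. By the tower property, $\mathbb{E}[Y(a) \mid T=t] = \mathbb{E}\bigl[\mathbb{E}[Y(a) \mid X, T=t] \mid T=t\bigr]$. Assumption~\ref{ass:text_gen} delivers $Y(a) \indep T \mid X$ (since $T$ depends on $X$ only through the noise $\epsilon$), and then unconfoundedness (Assumption~\ref{ass:basic}(ii)) together with consistency lets me rewrite $\mathbb{E}[Y(a) \mid X] = \mathbb{E}[Y \mid X, A=a] = \mu_a^x(X)$. Therefore
\begin{equation*}
\mathbb{E}[Y(a) \mid T=t] \;=\; \mathbb{E}\bigl[\mu_a^x(X) \mid T=t\bigr].
\end{equation*}

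Subtracting these identities for $a=1$ and $a=0$ and taking the difference $\tau^t_{\mathrm{naive}}(t) - \tau^t(t)$ gives exactly the claimed expression. The main obstacle, though small, is carefully justifying the two conditional independence statements $Y \indep T \mid (X, A)$ and $Y(a) \indep T \mid X$ from Assumption~\ref{ass:text_gen}; both rely on the fact that $T$ depends on $(Y, A)$ only through $X$ via the noise $\epsilon$, and that consistency ties $Y$ back to the relevant potential outcome $Y(A)$. Once those independencies are in hand, the rest is a mechanical application of the tower property and does not require any additional structural assumptions such as overlap in $T$.
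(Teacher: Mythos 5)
Your proposal is correct and follows essentially the same route as the paper's proof: both reduce $\mathbb{E}[Y\mid A=a,T=t]$ to $\mathbb{E}[\mu_a^x(X)\mid A=a,T=t]$ via consistency, iterated expectations, unconfoundedness, and the noise independence $\epsilon \indep Y(a)\mid X$, reduce $\mathbb{E}[Y(a)\mid T=t]$ to $\mathbb{E}[\mu_a^x(X)\mid T=t]$ the same way, and subtract. The only difference is the order in which consistency and the tower property are applied, and your explicit flagging of the conditional-independence step ($Y(a)\indep T\mid X$, resp.\ $Y\indep T\mid (X,A)$) is if anything slightly more careful than the paper, which invokes it implicitly.
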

\begin{proof}
\vspace{-0.2cm}
The proof is in the Appendix~\ref{app:proof-bias}.
\end{proof} 
In the context of \oursetting, when $T$ does not capture all the necessary confounding information in $X$ for potential outcome, the na{\"i}ve estimator is necessarily biased, as 
\begin{equation}
    \mathbb{E}\bigl[\mu_a^x(X) \mid A=a, T=t\bigr] \neq \mathbb{E}\bigl[\mu_a^x(X) \mid T=t\bigr]
\end{equation}
\begin{remark}[Non-zero bias of the na{\"i}ve estimator] The bias of the na{\"i}ve estimator remains non-zero under Assumption~\ref{ass:basic} and $Y(0), Y(1) \not\perp A \mid T$. 
\end{remark}

\emph{How to properly adjust for \oursetting?} To address the challenges due to \oursetting, we reformulate the target estimand $\tau^t(t)$. Our solution leverages the fact that we have access to the true confounder $X$ in the training data to circumvent the problems arising from the test data with $T$ alone. 

\begin{lemma}[Identifiablity of $\tau^t(t)$]
\label{lemma:identi_cate_t}
For any $t \in \mathcal{T}$, under Assumption~\ref{ass:text_gen}, $\tau^t(t)$ can be identified through $\tau^x(x)$ via
\begin{equation}
    {\tau^t(t)}=\mathbb{E}\left[{\tau^x(X)} \mid T=t\right].
\end{equation}
\end{lemma}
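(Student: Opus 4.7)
The plan is to prove the identification formula $\tau^t(t) = \mathbb{E}[\tau^x(X) \mid T=t]$ by a straightforward tower-property argument, with the conditional independence provided by Assumption~\ref{ass:text_gen} being the crucial ingredient that lets us collapse a joint conditional expectation onto $X$ alone.

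First I would unfold the definition of $\tau^t(t)$ and apply the law of total expectation to introduce $X$ as an auxiliary conditioning variable:
\begin{equation}
\tau^t(t) = \mathbb{E}[Y(1) - Y(0) \mid T = t] = \mathbb{E}\bigl[\mathbb{E}[Y(1) - Y(0) \mid X, T] \,\big|\, T = t\bigr].
\end{equation}
The next step is to argue that the inner conditional expectation simplifies to $\tau^x(X)$. This is where Assumption~\ref{ass:text_gen} does the work: since $T = h(X, \epsilon)$ and $\epsilon \indep Y(a) \mid X$, conditional on $X$ the variable $T$ is a measurable function of $\epsilon$ alone, and hence $T \indep Y(a) \mid X$ for $a \in \{0,1\}$. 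This conditional independence immediately gives
\begin{equation}
\mathbb{E}[Y(a) \mid X, T] = \mathbb{E}[Y(a) \mid X],
\end{equation}
so by linearity the inner expectation equals $\mathbb{E}[Y(1) \mid X] - \mathbb{E}[Y(0) \mid X] = \tau^x(X)$. Substituting back yields the claim $\tau^t(t) = \mathbb{E}[\tau^x(X) \mid T=t]$.

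The main obstacle, and essentially the only non-mechanical step, is justifying $T \indep Y(a) \mid X$ from the stated form of Assumption~\ref{ass:text_gen}. The assumption gives us independence of the noise $\epsilon$ from $Y(a)$ conditional on $X$, and we need to lift this to independence of $T = h(X,\epsilon)$ from $Y(a)$ conditional on $X$. The standard argument is that once we condition on $X = x$, the random variable $T$ becomes $h(x, \epsilon)$, a measurable function of $\epsilon$, so any independence statement about $\epsilon$ (given $X$) transfers to $T$ (given $X$) by the fact that measurable functions of independent variables are independent. I would write this out carefully in one short paragraph. Note that Assumption~\ref{ass:basic} is not strictly needed for the identification formula itself, but $\tau^x$ being identifiable from the training data (via consistency, unconfoundedness, and overlap) is what makes the formula \emph{operational} for estimation in the subsequent sections.
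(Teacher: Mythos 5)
Your proposal is correct and follows essentially the same route as the paper's proof: unfold the definition, apply the law of iterated expectations to condition on $X$, and use Assumption~\ref{ass:text_gen} to collapse $\mathbb{E}[Y(a)\mid X,T]$ to $\mathbb{E}[Y(a)\mid X]=\mu_a^x(X)$. Your explicit justification that $T=h(X,\epsilon)$ being a measurable function of $\epsilon$ given $X$ lifts $\epsilon \indep Y(a)\mid X$ to $T\indep Y(a)\mid X$ is in fact slightly more careful than the paper, which asserts this step without elaboration.
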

\begin{proof}
\vspace{-0.2cm}
Proof is in the Appendix~\ref{app:proof-identi_cate}.
\vspace{-0.3cm}
\end{proof} 
Lemma~\ref{lemma:identi_cate_t} gives us \emph{unbiased} estimation of $\tau^t(t)$ through $\tau^x(x)$. Given the identifiability of $\tau^x(x)$, we can estimate $\tau^t(t)$ via
\begin{equation}
    \tau^t(t)=\mathbb{E}\left[\mu_1^x(X)-\mu_0^x(X) \mid T=t\right]
    \label{eq:adjust_cate_t}
\end{equation}
Importantly, the above reformulation motivates our approach: the reformulation essentially addresses the estimation challenge by first learning the ground-truth response surfaces $\mu_a^x$ from $\mathcal{D}_X$ and then \emph{conditioning} on the inference time text confounder $T$. Eq.~\ref{eq:adjust_cate_t} allows us to construct a tailored training procedure where we first fit response surfaces $\mu_a^x$ using $\mathcal{D}_X$, and then perform text-conditioned CATE regression by training an auxiliary model to estimate $\mathbb{E}\left[\mu_a^x(X) \mid T=t\right]$. In this way, at training time, the model learns the function mapping from text confounder $T$ to the true CATE $\tau^t(t)$. Once finishing training the model, at test time when $X$ is unobservable, the model still ensures unbiased CATE estimation in the presence of \oursetting.

\subsection{Text-based surrogate confounder}
\label{sec:text_gen}

The above analysis assumes access to the induced text confounder $T$ during training. However, in the practical setting, we only have access to the confounder $X$. Hence, to be able to nevertheless leverage our reformulation in Lemma~\ref{lemma:identi_cate_t}, we construct a text-based surrogate confounder $\tilde{T}=g(X)$ through an LLM-based \emph{text generation} procedure, where $g: \mathcal{X} \rightarrow \mathcal{T}$ maps structured features to text space. Importantly, by using $g$ in this way, we map confounder $X$ to the induced text confounder $\tilde{T}$ naturally follows the text generation mechanism in Assumption~\ref{ass:text_gen}, as the only input to the LLM is the $X$, thus the key confounding information in $X$ is carried over into $\tilde{T}$ with some random noise from the LLM.

Note that our method does not require the induced text confounder $T$ to contain all the necessary information from the true confounder $X$ for potential outcomes to ensure unbiased estimation of $\tau^t(t)$. Instead, we allow $T$ to preserve only partial confounding information in $X$, which is essentially consistent with real-world scenarios, where information about true confounding at inference time can be partially observed or missing. For instance, if $X$ includes diagnostic measurements such as heart rate, $\tilde{T}$ might be a self-reported description of the symptoms such as \emph{``My heart is racing''}.

We thus construct our training data $\tilde{\mathcal{D}}_{X}=\left(x_i, \tilde{t_i}, a_i, y_i\right)_{i=1}^n$. The generation details can be found in Sec.~\ref{sec:implement} and Appendix~\ref{app:implement}. The surrogate $\tilde{T}$ enables us to adapt the key identity from Sec.~\ref{sec:motivation_bias}, that $\tau^t(t)$ can be computed by $\mathbb{E}\left[\mu_1^x(X)-\mu_0^x(X) \mid \tilde{T}=\tilde{t}\right]$. This text generation step transforms our theoretical identifiability result into a practical framework \footnote{Modern LLMs' ability to transfer structured data into semantically rich text provides a principled approximation. While discrepancies of distribution between $\tilde{T}$ and the hypothetical $T$ may exist, it can be mitigated by some domain adaptation methods. For example, if we have access to the example $T$, we can adapt $\tilde{T}$ accordingly.}. In Sec.~\ref{sec:exp}, we empirically validate that including this step helps estimate $\tau^t(t)$ more effectively than methods ignoring residual confounding.

\subsection{Doubly-robust CATE estimation with text confounders}
\label{sec:dr-estimation}

Here, we adapt a state-of-the-art CATE meta-learner for our framework. Specifically, we leverage a doubly-robust (DR) learner \cite{kennedy2023towards} for estimating CATE $\tau^t(t)$ due to the favorable theoretical property of being double robust. This step has two sub-steps: (i)~Nuisance functions estimation using the true confounder $X$ and pseudo-outcomes construction; and (ii)~text-conditioned regression with pseudo-outcomes.

\textbf{Estimating nuisance functions and pseudo-outcomes:} First, we estimate the response surface $\mu_a^x(x)$ and the propensity score $\pi^x(x)$ using training data $\mathcal{D}_{X}$. Let $\hat{\eta}^x(x)=\left(\hat{\mu}_0^x(x), \hat{\mu}_1^x(x), \hat{\pi}^x(x)\right)$ denote the estimated nuisance functions, where $\hat{\mu}_a^x(x)$ is the estimated response surface for treatment $A=a$, $\hat{\pi}^x(x)$ is the estimated propensity score. Following \cite{coston2020counterfactual}, these estimates enable the construction of a doubly-robust pseudo-outcome for each observation sample via
\begin{equation}
        \tilde{Y}_i=\hat{\mu}_1^x\left(x_i\right)-\hat{\mu}_0^x\left(x_i\right)+\frac{A_i}{\hat{\pi}^x\left(x_i\right)}\left(Y_i-\hat{\mu}_1^x\left(x_i\right)\right)-\frac{1-A_i}{1-\hat{\pi}^x\left(x_i\right)}\left(Y_i-\hat{\mu}_0^x\left(x_i\right)\right) .
\end{equation}
The first term estimates the CATE conditioned on $X$, while the remaining terms apply inverse propensity weighting to correct for potential biases in the response surface estimates.

\textbf{Text-conditioned regression:} To estimate $\tau^t(t)$ using the auxiliary dataset $\tilde{\mathcal{D}}_{X}$ from Sec.~\ref{sec:text_gen}, we map the text $\tilde{T}$ to the text embedding $\phi(\tilde{T})$ using a pretrained LLM. We then train a regression model $f_\theta: \mathcal{T} \rightarrow \mathbb{R}$ to predict the pseudo-outcome $\tilde{Y}$ from text embeddings $\phi(\tilde{T})$ by minimizing the loss
\begin{equation}
    \mathcal{L}(\theta)=\sum_{i=1}^n\left(\tilde{Y}_i-f_\theta\left(\phi(\tilde{t}_i)\right)\right)^2+\lambda\|\theta\|_2^2 ,
\end{equation}
where $\lambda$ is a regularization parameter. At inference time, the final CATE estimate is given by $\hat{\tau}^t(t)=f_\theta(\phi(t))$, which maps text inputs to CATE estimates.

\begin{corollary}[Double robustness property of the estimator]
\label{corollary:double-robust}
The estimator satisfies the double robustness property by construction: $\hat{\tau}^t(t)$ converges to the true $\tau^t(t)$ if either (i)~the response surface estimates $\hat{\mu}_a^x$ are consistent, or (ii) the propensity score estimate $\hat{\pi}^x$ is consistent. This ensures validity even under potential model misspecification.
\end{corollary}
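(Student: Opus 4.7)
The plan is to establish double robustness in two layers: first the classical pointwise DR identity for the pseudo-outcome conditional on the true confounder $X$, and then use the text generation mechanism (Assumption~\ref{ass:text_gen}) together with Lemma~\ref{lemma:identi_cate_t} to propagate this to the text-conditioned regression target $\hat{\tau}^t(t)$.

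First, I would let $(\mu_a^\ast, \pi^\ast)$ denote the probability limits of the nuisance estimators $(\hat{\mu}_a^x, \hat{\pi}^x)$ and write the limiting pseudo-outcome
\begin{equation}
\tilde{Y}^\ast \;=\; \mu_1^\ast(X)-\mu_0^\ast(X)+\frac{A}{\pi^\ast(X)}(Y-\mu_1^\ast(X))-\frac{1-A}{1-\pi^\ast(X)}(Y-\mu_0^\ast(X)).
\end{equation}
Taking conditional expectations given $X=x$ and using Assumption~\ref{ass:basic} (so that $\mathbb{E}[Y\mid X=x,A=a]=\mu_a^x(x)$ and $\mathrm{Pr}(A=a\mid X=x)=\pi_a^x(x)$), a short algebraic manipulation yields the familiar decomposition
\begin{equation}
\mathbb{E}[\tilde{Y}^\ast\mid X=x] \;=\; \tau^x(x) \;+\; \Bigl(\tfrac{\pi^x(x)}{\pi^\ast(x)}-1\Bigr)(\mu_1^x(x)-\mu_1^\ast(x)) \;-\; \Bigl(\tfrac{1-\pi^x(x)}{1-\pi^\ast(x)}-1\Bigr)(\mu_0^x(x)-\mu_0^\ast(x)).
\end{equation}
The two correction terms vanish pointwise if either (i)~$\mu_a^\ast=\mu_a^x$ for $a\in\{0,1\}$ or (ii)~$\pi^\ast=\pi^x$, which is exactly the classical DR identity $\mathbb{E}[\tilde{Y}^\ast\mid X=x]=\tau^x(x)$ under either consistency condition.

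Next, I would push this identity through the text layer. By Assumption~\ref{ass:text_gen}, $\tilde{T}=g(X)=h(X,\epsilon)$ with $\epsilon$ independent of the potential outcomes (and, by the same mechanism, of $A$) given $X$, so $\tilde{T}\indep(A,Y)\mid X$ and hence $\tilde{T}\indep\tilde{Y}^\ast\mid X$. The tower property then gives
\begin{equation}
\mathbb{E}[\tilde{Y}^\ast\mid \tilde{T}=\tilde{t}] \;=\; \mathbb{E}\bigl[\mathbb{E}[\tilde{Y}^\ast\mid X]\,\bigm|\,\tilde{T}=\tilde{t}\bigr] \;=\; \mathbb{E}[\tau^x(X)\mid \tilde{T}=\tilde{t}] \;=\; \tau^t(\tilde{t}),
\end{equation}
where the last equality is Lemma~\ref{lemma:identi_cate_t}. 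Since $f_\theta$ is fit by minimizing a regularized squared loss against $\tilde{Y}$ on embeddings $\phi(\tilde{T})$, standard consistency of nonparametric regression (with $\lambda\to 0$ at an appropriate rate and a sufficiently expressive class) yields $f_\theta(\phi(\tilde{t}))\to\mathbb{E}[\tilde{Y}^\ast\mid\tilde{T}=\tilde{t}]$ in probability, which combined with the display above gives $\hat{\tau}^t(t)\to\tau^t(t)$.

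The routine piece is the algebraic DR identity; the main obstacle I anticipate is the clean handling of the text layer. Specifically, I must be careful to (a)~invoke the right conditional independence from Assumption~\ref{ass:text_gen} to justify $\mathbb{E}[\tilde{Y}^\ast\mid X,\tilde{T}]=\mathbb{E}[\tilde{Y}^\ast\mid X]$, and (b)~state the regression-consistency conditions under which the empirical minimizer of $\mathcal{L}(\theta)$ recovers the conditional expectation $\mathbb{E}[\tilde{Y}\mid\tilde{T}]$. A secondary subtlety is the distinction between the generated surrogate $\tilde{T}$ used at training and the deployment text $T$; if one wants the guarantee at the deployment distribution of $T$, one additionally needs that the LLM's generator matches the true mechanism $h$, or an explicit domain-adaptation argument of the kind alluded to in Section~\ref{sec:text_gen}. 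Everything else is bookkeeping.
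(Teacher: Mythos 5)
Your proof is correct, and its core is the same as the paper's: establish that the AIPW pseudo-outcome satisfies $\mathbb{E}[\tilde{Y}\mid X=x]=\tau^x(x)$ whenever either nuisance limit is correct. Where you differ is in presentation and scope. The paper argues the identity via two separate cases (correct outcomes, then correct propensity with explicit error terms $\delta_a$), whereas you derive the single product-of-errors decomposition
$\mathbb{E}[\tilde{Y}^\ast\mid X=x]-\tau^x(x)=\bigl(\tfrac{\pi^x}{\pi^\ast}-1\bigr)(\mu_1^x-\mu_1^\ast)-\bigl(\tfrac{1-\pi^x}{1-\pi^\ast}-1\bigr)(\mu_0^x-\mu_0^\ast)$,
which subsumes both cases at once and makes the rate-product structure visible; this is a strictly cleaner route to the same identity. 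More substantively, the paper's proof \emph{stops} at the $X$-level identity, leaving the link to the actual claim about $\hat{\tau}^t(t)$ implicit, while you complete the argument: you use the conditional independence of the text noise to get $\mathbb{E}[\tilde{Y}^\ast\mid\tilde{T}]=\mathbb{E}[\tau^x(X)\mid\tilde{T}]=\tau^t(\cdot)$ via Lemma~\ref{lemma:identi_cate_t}, and then invoke regression consistency for $f_\theta$. Two caveats you correctly flag deserve emphasis: the tower-property step needs $\tilde{T}\indep(A,Y)\mid X$, which requires $\epsilon$ to be independent of $A$ (not only of $Y(a)$) given $X$ --- slightly stronger than the literal statement of Assumption~\ref{ass:text_gen}, though the paper uses the same strengthening implicitly in its bias lemma; and the final convergence of $f_\theta$ to the conditional expectation is a nonparametric regression consistency condition that the corollary's informal statement glosses over in both your write-up and the paper's.
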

\vspace{-0.2cm}
We refer to the Appendix~\ref{app:proof-dr} for the detailed proof.

\subsection{\frameworkshort framework}
\label{sec:framework-overview}

\textbf{Training:} Our framework \frameworklong (\frameworkshort) operates through three stages. \emph{Stage \circled{1}: Estimating nuisance functions and pseudo-outcome}. We first estimate the nuisance functions based on $\mathcal{D}_X$ and construct doubly-robust pseudo-outcome. \emph{Stage \circled{2}: Generating the text-based surrogate confounder}. We generate text surrogates $\tilde{T}$ through an LLM-based mapping, which allows us to link the true confounder to the induced text confounder. \emph{Stage \circled{3}: Doubly-robust text-conditioned regression}. We perform a doubly-robust CATE estimation, where we train the regression model with pseudo-outcome conditioned on text to estimate CATE. The full \frameworkshort is shown in Algorithm~\ref{alg:tca-algorithm}. 

\textbf{Inference:} At inference time, we are given a new text sample $t$ from $\mathcal{D}_T$ and compute CATE via $\hat{\tau}^t(t) = f_\theta(\phi(t))$.

Therein, we address \oursetting by reformulating the target estimand with the true confounder $X$ via Eq.~\ref{eq:adjust_cate_t}, which establishes the \emph{theoretical foundation} for our method in CATE estimation: we decouple the treatment effect estimation (using $X$) from the test-time adaptation (using $T$), which allows us to overcome the non-identifiability of $\tau^t(t)$ from $\mathcal{D}_T$ alone. By conditioning on the generated $\tilde{T}$ during training, our framework allows us to learn a mapping of the induced text confounder onto the treatment effects.

\setlength{\columnsep}{22pt}  
\begin{wrapfigure}[26]{R}{0.5\textwidth}
\vspace{-1.8cm}
\begin{minipage}{0.5\textwidth}
\begin{algorithm}[H]
\label{alg:tca-algorithm}
    \caption{\small \frameworkshort for CATE estimation with \oursetting.}
    \footnotesize
    \textbf{Input:} $\mathcal{D}_X = \{(x_i, a_i, y_i)\}_{i=1}^n$, test data $\mathcal{D}_{T}=\left(t_j, a_j, y_j\right)_{j=1}^m$, text generator $g$, pretrained encoder $\phi$, regularization $\lambda$ \\

    \BlankLine
    \textbf{Training:} \\
    $\vartriangleright$ \emph{Stage \circled{1}} \\
     Estimate nuisance functions on $\mathcal{D}_X$: \\
    \quad $\bullet$ Fit \quad $\hat{\mu}_a^x(x) \gets \argmin_{\mu} \sum_{i=1}^n \big(y_i - \mu_a(x_i)\big)^2\ \ \forall a \in \{0,1\}$ \\
    \quad$\bullet$ Fit $\hat{\pi}^x(x) \gets \argmax_{\pi} \sum_{i=1}^n \Big[ a_i\log\pi(x_i) + (1-a_i)\log\big(1-\pi(x_i)\big) \Big]$ \\
    \For{$i = 1$ \textbf{to} $n$}{
        Construct doubly robust pseudo-outcomes \\ $\tilde{y}_i \gets \hat{\mu}_1^x(x_i) - \hat{\mu}_0^x(x_i) + \frac{a_i}{\hat{\pi}^x(x_i)}\big(y_i - \hat{\mu}_1^x(x_i)\big) - \frac{1-a_i}{1-\hat{\pi}^x(x_i)}\big(y_i - \hat{\mu}_0^x(x_i)\big)$ \\
    }
    $\vartriangleright$ \emph{Stage \circled{2}} \\
    \For{$i = 1$ \textbf{to} $n$}{
        \quad Generate text surrogates $\tilde{t}_i \gets g(x_i)$   \\
        \quad Get text embeddings $z_i \gets \phi(\tilde{t}_i)$  \\
    }
    $\vartriangleright$ \emph{Stage \circled{3}} \\
    Train CATE learner ${f}_\theta$ via
    ${f}_\theta \gets \argmin_{\theta} \bigg[ \sum_{i=1}^n \big(\tilde{y}_i - f_\theta(z_i)\big)^2 + \lambda\|\theta\|_2^2 \bigg]$ \\
    \textbf{Inference:} \\
    Predict CATE given new text input $t_j \in D_T$ via $\hat{\tau}^t(t_j) \gets {f}_\theta\big(\phi(t_j)\big)$ \\
    \textbf{Output:} CATE estimator $\hat{\tau}^t(t)$  \\    
\end{algorithm}
\end{minipage}
\end{wrapfigure}

\section{Implementation details}
\label{sec:implement}
Given structured clinical confounders $X$, we generate text confounders $\tilde{T}$ via the OpenAI API \cite{brown2020language}. The generated narratives are approximately $150$-$200$ tokens long. We obtain representations of $\tilde{T}$ using pretrained BERT \cite{devlin2018bert}. Token embeddings from the final transformer layer undergo mean pooling, yielding fixed-dimensional representations $\phi(\tilde{t}_i) \in \mathbb{R}^{768}$. Training time and further implementation details are in Appendix~\ref{app:implement}.

\section{Experiment}
\label{sec:exp}

\subsection{Setup}

\textbf{Datasets:} We use the following datasets from medical practice for benchmarking: (i)~The International Stroke Trial (\textbf{IST}) \cite{sandercock2011international} is one of the largest randomized controlled trials in acute stroke treatment. The dataset comprises $19,435$ patients. (ii)~\textbf{MIMIC-III} \cite{johnson2016mimic} is a large, single-center database comprising information relating to patients admitted to critical care units at a large tertiary care hospital. MIMIC-III contains $38,597$ distinct adult patients. We adhere to the terms and conditions governing the use of the MIMIC dataset (in particular, our analysis is HIPAA compliant). Details are in Appendix~\ref{app:acknow_mimic}.

Due to the fundamental problem of causal inference, the counterfactual outcomes are never observed in real-world data. We thus follow prior literature (e.g.,\cite{curth2021nonparametric, curth2021inductive,kennedy2023towards,kunzel2019metalearners, shalit2017estimating, }) and benchmark our model using semi-synthetic datasets. Details of datasets are in Appendix~\ref{app:all_datasets}. 

\textbf{Baselines:} Due to the novelty of our setting, there are \underline{no} existing methods tailored for this \oursetting setting. Hence, we compare our method against the na{\"i}ve \textbf{text-based estimators (TBE)}: training standard CATE learners directly on the text features $\phi(\tilde{T})$. We compare with the TBE-T-learner \cite{kunzel2019metalearners}; TBE-S-learner \cite{kunzel2019metalearners}; TBE-TARNet \cite{shalit2017estimating}; TBE-CFRNet \cite{shalit2017estimating}. Note that we use the same $\tilde{T}$ and $\phi(\tilde{T})$ for both our method and the baselines to ensure a \emph{fair comparison}. We also used equivalent hyper-parameters across all baselines, thus ensuring that baselines can have the same number of hidden layers and units. (See implementation details in Appendix~\ref{app:implement}.)

\textbf{Evaluation metrics:} We report the \emph{precision of estimating the heterogeneous effects (PEHE)} criterion \cite{curth2021nonparametric,hill2011bayesian} to evaluate the performance of our framework in estimating the CATE. We reported results for 5 runs each.

\begin{figure}[!t]
    \centering
     \caption{Performance of CATE estimation under varying confounder strengths and prompt strategies across datasets.}
     \vspace{-0.2cm}
    \begin{subfigure}{0.48\textwidth}
        \centering
        \includegraphics[width=\textwidth]{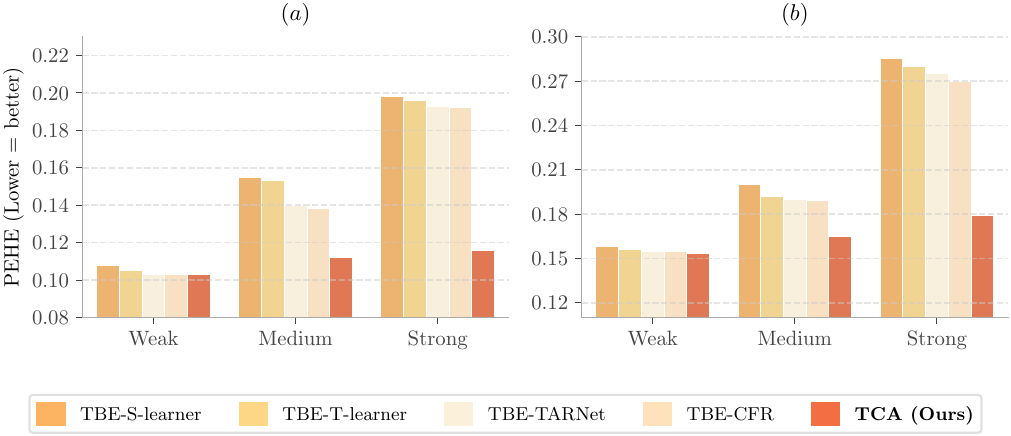}
        \caption{Results for CATE estimation under varying confounder strengths. (a): IST dataset. (b): MIMIC-III dataset.}
        \label{fig:diff-conf}
    \end{subfigure}
    \hfill
    \begin{subfigure}{0.48\textwidth}
        \centering
        \includegraphics[width=\textwidth]{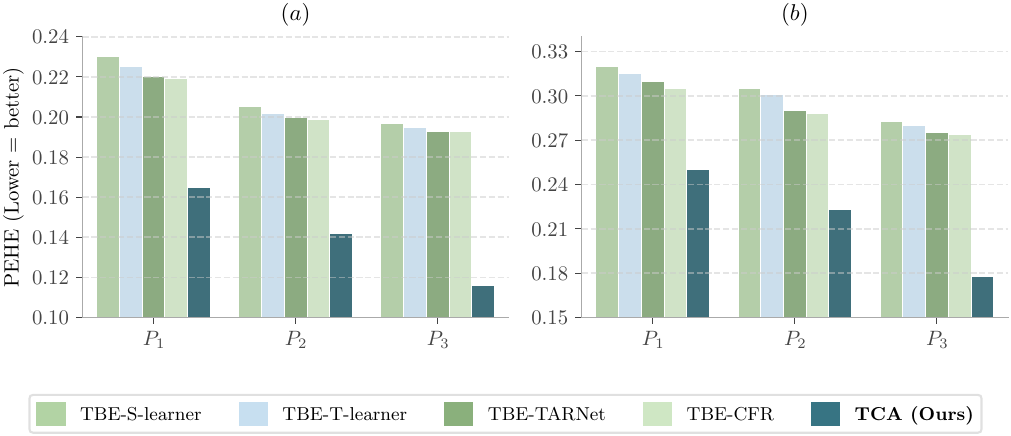}
        \caption{Results for CATE estimation under varying prompt strategies. (a): IST dataset. (b): MIMIC-III dataset.}
        \label{fig:diff-prompt}   
    \end{subfigure}
    \label{fig:diff-ablation}
    \vspace{-0.5cm}
\end{figure}

\begin{table}[t]
\caption{Performance of CATE estimation evaluated by PEHE across different demographic subgroups for each dataset. Reported: mean $\pm$ standard deviation.}
\vspace{0.2cm}
\label{tab:cate-subgroup}
\centering
\begin{subtable}[t]{0.49\textwidth}
\label{tab:cate-benchmarking-1}
\centering
\resizebox{\linewidth}{!}{%
\begin{tabu}{l|cc|cc}
\toprule
{} & \multicolumn{2}{c|}{\textbf{IST}} & \multicolumn{2}{c}{\textbf{MIMIC-III}} \\
{} & $\mathrm{G_M}$ & $\mathrm{G_F}$ & $\mathrm{G_M}$ & $\mathrm{G_F}$ \\
\midrule
TBE-S-learner \cite{kunzel2019metalearners} & 0.197 \tiny{$\pm$0.03} & 0.201 \tiny{$\pm$ 0.04} & 0.282 \tiny{$\pm$0.02} & 0.285 \tiny{$\pm$0.03} \\
TBE-T-learner \cite{kunzel2019metalearners} & 0.195 \tiny{$\pm$ 0.03} & 0.197 \tiny{$\pm$ 0.04} & 0.279 \tiny{$\pm$ 0.02} & 0.28 \tiny{$\pm$0.03} \\
TBE-TARNet \cite{shalit2017estimating} & 0.192 \tiny{$\pm$ 0.03} & 0.193 \tiny{$\pm$ 0.04} & 0.275 \tiny{$\pm$0.02} & 0.277 \tiny{$\pm$0.03} \\
TBE-CFR \cite{shalit2017estimating} & 0.191 \tiny{$\pm$ 0.03} & 0.193 \tiny{$\pm$ 0.04} & 0.274 \tiny{$\pm$0.02} & 0.275 \tiny{$\pm$0.03} \\
\midrule
\textbf{TCA} (Ours) & \textbf{0.115} \tiny{$\pm$ 0.02} & \textbf{0.116} \tiny{$\pm$ 0.02} & \textbf{0.179} \tiny{$\pm$ 0.02} & \textbf{0.180} \tiny{$\pm$ 0.02} \\
\bottomrule
\multicolumn{5}{l}{Lower $=$ better (best in bold). }
\end{tabu}%
}
\end{subtable}%
\hfill
\begin{subtable}[t]{0.49\textwidth}
\label{tab:cate-benchmarking-2}
\centering
\resizebox{\linewidth}{!}{%
\begin{tabu}{l|cc|cc}
\toprule
{} & \multicolumn{2}{c|}{\textbf{IST}} & \multicolumn{2}{c}{\textbf{MIMIC-III}} \\
{} & $\mathrm{G_Y}$ & $\mathrm{G_O}$ & $\mathrm{G_Y}$ & $\mathrm{G_O}$ \\
\midrule
TBE-S-learner \cite{kunzel2019metalearners} & 0.203 \tiny{$\pm$0.04} & 0.197 \tiny{$\pm$ 0.03} & 0.288 \tiny{$\pm$0.03} & 0.283 \tiny{$\pm$0.02} \\
TBE-T-learner \cite{kunzel2019metalearners} & 0.201 \tiny{$\pm$ 0.03} & 0.195 \tiny{$\pm$ 0.03} & 0.282 \tiny{$\pm$ 0.03} & 0.279 \tiny{$\pm$0.02} \\
TBE-TARNet \cite{shalit2017estimating} & 0.198 \tiny{$\pm$ 0.03} & 0.192 \tiny{$\pm$ 0.03} & 0.278 \tiny{$\pm$0.03} & 0.274 \tiny{$\pm$0.02} \\
TBE-CFR \cite{shalit2017estimating} & 0.195 \tiny{$\pm$ 0.03} & 0.191 \tiny{$\pm$ 0.03} & 0.275 \tiny{$\pm$0.03} & 0.273 \tiny{$\pm$0.02} \\
\midrule
\textbf{TCA} (Ours) & \textbf{0.117} \tiny{$\pm$ 0.03} & \textbf{0.115} \tiny{$\pm$ 0.02} & \textbf{0.181} \tiny{$\pm$ 0.03} & \textbf{0.178} \tiny{$\pm$ 0.02} \\
\bottomrule
\multicolumn{5}{l}{Lower $=$ better (best in bold). }
\end{tabu}%
}
\end{subtable}
\vspace{-0.3cm}
\end{table}

\subsection{Results}

\setlength{\columnsep}{16pt}
\begin{wraptable}{r}{0.35\textwidth}
\vspace{-0.4cm}
\caption{Benchmarking results for CATE estimation evaluated using PEHE on the IST and MIMIC-III datasets. Reported: mean $\pm$ standard deviation.}
\vspace{-0.1cm}
\label{tab:ori-cate}
\centering
\resizebox{\linewidth}{!}{%
\begin{tabu}{l|c|c}
\toprule
{} & \textbf{IST} & \textbf{MIMIC-III} \\
\midrule
TBE-S-learner \cite{kunzel2019metalearners} & 0.198 \tiny{$\pm$0.03} & 0.283 \tiny{$\pm$0.02} \\
TBE-T-learner \cite{kunzel2019metalearners} & 0.196 \tiny{$\pm$ 0.03} & 0.280 \tiny{$\pm$ 0.02} \\
TBE-TARNet \cite{shalit2017estimating} & 0.193 \tiny{$\pm$ 0.03} & 0.275 \tiny{$\pm$0.02} \\
TBE-CFR \cite{shalit2017estimating} & 0.192 \tiny{$\pm$ 0.03} & 0.274 \tiny{$\pm$0.02} \\
\midrule
\textbf{TCA} (Ours) & \textbf{0.116} \tiny{$\pm$ 0.02} & \textbf{0.179} \tiny{$\pm$ 0.02} \\
\bottomrule
\multicolumn{3}{l}{Lower $=$ better (best in bold). }
\end{tabu}%
}
\vspace{-0.5cm}
\end{wraptable}

\textbf{Benchmarking results for CATE estimation:} We report the performance of CATE estimation in Table~\ref{tab:ori-cate}. Our \frameworkshort consistently outperforms the baseline methods across both datasets and groups. The improvement is substantial, as the error rates of our \frameworkshort are approximately 40\% and 35\% lower than the best baseline methods on IST and MIMIC-III, respectively. This demonstrates the effectiveness of our framework in mitigating bias from residual confounding at inference time. 

\textbf{Additional studies:} We further conduct additional studies to gain a deeper understanding of our framework.

$\bullet$\,\textbf{Varying confounder strengths:} We evaluate the performance of \frameworkshort in estimating CATE under different confounding strengths by varying the influence of true confounders $X$ to assess the impact of residual confounding on estimation accuracy. Results are in Fig.~\ref{fig:diff-conf}. It demonstrates the effectiveness and robustness of our framework under varying confounder strength.

$\bullet$\,\textbf{Varying prompt strategies:} We further analyze the impact of prompt engineering on LLM-based generation by evaluating three different prompt strategies: (i)~\emph{Factual prompts} ($P_1$) are a basic approach using a fixed template. Here, the prompts directly convert the patient information using a fixed structure (e.g., ``\texttt{\scriptsize Transfer this patient information into a paragraph of text.}'') (ii)~\emph{Narrative prompts} ($P_2$) allow to capture rich context. Here, the prompts capture detailed symptom experiences but exclude diagnostics (e.g., ``\texttt{\scriptsize Write a detailed clinical narrative for a patient with these features.}'') (iii)~\emph{Symptom-focused prompt} ($P_3$) (advanced): Patient-centric symptom descriptions (e.g., ``\texttt{\scriptsize Now this patient just does not have diagnostic measurements available. How would this patient describe his/her feeling by text?}''). Detailed examples of these prompts are provided in Appendix~\ref{app:ablation}. Results are in Fig~\ref{fig:diff-prompt}. Evidently, we see our method outperform baselines across all prompt strategies, confirming the effectiveness of \frameworkshort. 

\setlength{\columnsep}{16pt}
\begin{wraptable}{r}{0.35\textwidth}
\vspace{-0.4cm}
\caption{Results of CATE estimation benchmarking evaluated with PEHE on the IST and MIMIC-III datasets under varying LLMs. Reported: mean $\pm$ standard deviation.}
\label{tab:chatgpt3}
\vspace{-0.2cm}
\centering
\resizebox{\linewidth}{!}{%
\begin{tabu}{l|c|c}
\toprule
{} & \textbf{IST} & \textbf{MIMIC-III} \\
\midrule
TBE-S-learner \cite{kunzel2019metalearners} & 0.224 \tiny{$\pm$0.04} & 0.314 \tiny{$\pm$0.03} \\
TBE-T-learner \cite{kunzel2019metalearners}& 0.221 \tiny{$\pm$ 0.04} & 0.310 \tiny{$\pm$ 0.03} \\
TBE-TARNet \cite{shalit2017estimating} & 0.218 \tiny{$\pm$ 0.04} & 0.304 \tiny{$\pm$0.03} \\
TBE-CFR \cite{shalit2017estimating}& 0.216 \tiny{$\pm$ 0.04} & 0.299 \tiny{$\pm$0.03} \\
\midrule
\textbf{TCA} (Ours) & \textbf{0.141} \tiny{$\pm$ 0.03} & \textbf{0.205} \tiny{$\pm$ 0.03} \\
\bottomrule
\multicolumn{3}{l}{Lower $=$ better (best in bold). }
\end{tabu}%
}
\vspace{-0.2cm}
\end{wraptable}

$\bullet$\,\textbf{Varying LLMs:} We further analyze the impact of different LLMs. We show the results for benchmarking CATE estimation with PEHE on IST and MIMIC-III datasets using GPT-3.5 Turbo in Table~\ref{tab:chatgpt3}. It shows that our method still performs better than the baselines regardless of the LLMs used for generating text.

\textbf{Further insights:} Our framework leverages LLM-derived text to capture an induced text confounder during training. Hence, we acknowledge risks from LLM use, such as biases or data misrepresentation. We thus conduct experiments on subgroups in the datasets. (i)~We split the datasets by gender into two groups $\mathrm{G_F}$ and $\mathrm{G_M}$ with females and males, respectively. (ii)~We split the datasets by age into two groups $\mathrm{G_Y}$ and $\mathrm{G_O}$ as age before or above 45, respectively. Results are shown in Table~\ref{tab:cate-subgroup}. We observe minimal performance variation across subgroups, suggesting that the LLMs do not introduce significant subgroup-specific bias. Our method consistently outperforms baselines across different subpopulations.

$\bullet$\,\textbf{Additional comparison with DR-learner:} Our \frameworkshort differs substantially from the standard DR-Learner. A simple off-the-shelf combination of our text-generation step with existing DR-learners would be biased. We additionally conduct experiments where we combine our text-generation pipeline with a standard DR learner \cite{kennedy2023towards} (referred to as TBE-DR). Importantly, we made the experiment fair: we used the same setup for the LLMs, etc. We report PEHE scores on two datasets, as shown in Table~\ref{tab:comp-dr}.

\setlength{\columnsep}{16pt}
\begin{wraptable}{r}{0.32\textwidth}
\vspace{-0.5cm}
\caption{Additional comparison with DR-learner. Reported: mean $\pm$ standard deviation.}
\label{tab:comp-dr}
\vspace{-0.2cm}
\centering
\resizebox{\linewidth}{!}{%
\begin{tabu}{l|c|c}
\toprule
{} & \textbf{IST} & \textbf{MIMIC-III} \\
\midrule
TBE-DR \cite{kennedy2023towards}& 0.187 \tiny{$\pm$ 0.03} & 0.266 \tiny{$\pm$0.02} \\
\midrule
\textbf{TCA} (Ours) & \textbf{0.116} \tiny{$\pm$ 0.02} & \textbf{0.179} \tiny{$\pm$ 0.02} \\
\bottomrule
\multicolumn{3}{l}{Lower $=$ better (best in bold). }
\end{tabu}%
}
\vspace{-0.4cm}
\end{wraptable}

From the results, we can see our \frameworkshort outperforms TBE-DR by a large margin. This baseline uses the same underlying estimation technique as ours but lacks our framework for confounding adjustment. These results highlight that the core innovation lies in our confounding adjustment framework, not in the choice of DR itself. In general, while our approach leverages the doubly robust estimation to benefit from the doubly robust properties, it could also be easily adapted to use other meta-learners in the final step. The novelty lies in how we adjust for the text time confounding.

\textbf{Conclusion:} Our paper highlights the challenge of \emph{\oursetting} for treatment effect estimation, where confounders that are fully observed during training are only partially observable through text at inference. This is common in any application of personalized medicine involving text input during clinical deployments, such as chatbots or medical LLMs for question answering. Our results show that our \frameworkshort framework can effectively yield reliable treatment effect estimates for personalized medicine, even when only limited textual descriptions are available at inference time.

\section{Acknowledgments}
This work has been supported by the German Federal Ministry of Education and Research (Grant: 01IS24082).

\clearpage
\printbibliography 


\newpage
\section*{NeurIPS Paper Checklist}

\begin{enumerate}

\item {\bf Claims}
    \item[] Question: Do the main claims made in the abstract and introduction accurately reflect the paper's contributions and scope?
    \item[] Answer: \answerYes{} 
    \item[] Justification: The main claims made in the abstract and introduction accurately reflect the paper's contributions and scope.
    \item[] Guidelines:
    \begin{itemize}
        \item The answer NA means that the abstract and introduction do not include the claims made in the paper.
        \item The abstract and/or introduction should clearly state the claims made, including the contributions made in the paper and important assumptions and limitations. A No or NA answer to this question will not be perceived well by the reviewers. 
        \item The claims made should match theoretical and experimental results, and reflect how much the results can be expected to generalize to other settings. 
        \item It is fine to include aspirational goals as motivation as long as it is clear that these goals are not attained by the paper. 
    \end{itemize}

\item {\bf Limitations}
    \item[] Question: Does the paper discuss the limitations of the work performed by the authors?
    \item[] Answer: \answerYes{}
    \item[] Justification: We discuss the limitations of the work in Appendix~\ref{app:discuss}.
    \item[] Guidelines:
    \begin{itemize}
        \item The answer NA means that the paper has no limitation while the answer No means that the paper has limitations, but those are not discussed in the paper. 
        \item The authors are encouraged to create a separate "Limitations" section in their paper.
        \item The paper should point out any strong assumptions and how robust the results are to violations of these assumptions (e.g., independence assumptions, noiseless settings, model well-specification, asymptotic approximations only holding locally). The authors should reflect on how these assumptions might be violated in practice and what the implications would be.
        \item The authors should reflect on the scope of the claims made, e.g., if the approach was only tested on a few datasets or with a few runs. In general, empirical results often depend on implicit assumptions, which should be articulated.
        \item The authors should reflect on the factors that influence the performance of the approach. For example, a facial recognition algorithm may perform poorly when image resolution is low or images are taken in low lighting. Or a speech-to-text system might not be used reliably to provide closed captions for online lectures because it fails to handle technical jargon.
        \item The authors should discuss the computational efficiency of the proposed algorithms and how they scale with dataset size.
        \item If applicable, the authors should discuss possible limitations of their approach to address problems of privacy and fairness.
        \item While the authors might fear that complete honesty about limitations might be used by reviewers as grounds for rejection, a worse outcome might be that reviewers discover limitations that aren't acknowledged in the paper. The authors should use their best judgment and recognize that individual actions in favor of transparency play an important role in developing norms that preserve the integrity of the community. Reviewers will be specifically instructed to not penalize honesty concerning limitations.
    \end{itemize}

\item {\bf Theory assumptions and proofs}
    \item[] Question: For each theoretical result, does the paper provide the full set of assumptions and a complete (and correct) proof?
    \item[] Answer: \answerYes{}
    \item[] Justification: We provide the full set of assumptions in Sec.~\ref{sec:prob_setup} and proof in the Appendix~\ref{app:proof}.
    \item[] Guidelines:
    \begin{itemize}
        \item The answer NA means that the paper does not include theoretical results. 
        \item All the theorems, formulas, and proofs in the paper should be numbered and cross-referenced.
        \item All assumptions should be clearly stated or referenced in the statement of any theorems.
        \item The proofs can either appear in the main paper or the supplemental material, but if they appear in the supplemental material, the authors are encouraged to provide a short proof sketch to provide intuition. 
        \item Inversely, any informal proof provided in the core of the paper should be complemented by formal proofs provided in appendix or supplemental material.
        \item Theorems and Lemmas that the proof relies upon should be properly referenced. 
    \end{itemize}

    \item {\bf Experimental result reproducibility}
    \item[] Question: Does the paper fully disclose all the information needed to reproduce the main experimental results of the paper to the extent that it affects the main claims and/or conclusions of the paper (regardless of whether the code and data are provided or not)?
    \item[] Answer: \answerYes{} 
    \item[] Justification: We fully disclose all the information needed to reproduce the main experimental results in the Appendix~\ref{app:implement}.
    \item[] Guidelines:
    \begin{itemize}
        \item The answer NA means that the paper does not include experiments.
        \item If the paper includes experiments, a No answer to this question will not be perceived well by the reviewers: Making the paper reproducible is important, regardless of whether the code and data are provided or not.
        \item If the contribution is a dataset and/or model, the authors should describe the steps taken to make their results reproducible or verifiable. 
        \item Depending on the contribution, reproducibility can be accomplished in various ways. For example, if the contribution is a novel architecture, describing the architecture fully might suffice, or if the contribution is a specific model and empirical evaluation, it may be necessary to either make it possible for others to replicate the model with the same dataset, or provide access to the model. In general. releasing code and data is often one good way to accomplish this, but reproducibility can also be provided via detailed instructions for how to replicate the results, access to a hosted model (e.g., in the case of a large language model), releasing of a model checkpoint, or other means that are appropriate to the research performed.
        \item While NeurIPS does not require releasing code, the conference does require all submissions to provide some reasonable avenue for reproducibility, which may depend on the nature of the contribution. For example
        \begin{enumerate}
            \item If the contribution is primarily a new algorithm, the paper should make it clear how to reproduce that algorithm.
            \item If the contribution is primarily a new model architecture, the paper should describe the architecture clearly and fully.
            \item If the contribution is a new model (e.g., a large language model), then there should either be a way to access this model for reproducing the results or a way to reproduce the model (e.g., with an open-source dataset or instructions for how to construct the dataset).
            \item We recognize that reproducibility may be tricky in some cases, in which case authors are welcome to describe the particular way they provide for reproducibility. In the case of closed-source models, it may be that access to the model is limited in some way (e.g., to registered users), but it should be possible for other researchers to have some path to reproducing or verifying the results.
        \end{enumerate}
    \end{itemize}

\item {\bf Open access to data and code}
    \item[] Question: Does the paper provide open access to the data and code, with sufficient instructions to faithfully reproduce the main experimental results, as described in supplemental material?
    \item[] Answer: \answerYes{} 
    \item[] Justification: We provide open access to the data and code.
    \item[] Guidelines:
    \begin{itemize}
        \item The answer NA means that paper does not include experiments requiring code.
        \item Please see the NeurIPS code and data submission guidelines (\url{https://nips.cc/public/guides/CodeSubmissionPolicy}) for more details.
        \item While we encourage the release of code and data, we understand that this might not be possible, so “No” is an acceptable answer. Papers cannot be rejected simply for not including code, unless this is central to the contribution (e.g., for a new open-source benchmark).
        \item The instructions should contain the exact command and environment needed to run to reproduce the results. See the NeurIPS code and data submission guidelines (\url{https://nips.cc/public/guides/CodeSubmissionPolicy}) for more details.
        \item The authors should provide instructions on data access and preparation, including how to access the raw data, preprocessed data, intermediate data, and generated data, etc.
        \item The authors should provide scripts to reproduce all experimental results for the new proposed method and baselines. If only a subset of experiments are reproducible, they should state which ones are omitted from the script and why.
        \item At submission time, to preserve anonymity, the authors should release anonymized versions (if applicable).
        \item Providing as much information as possible in supplemental material (appended to the paper) is recommended, but including URLs to data and code is permitted.
    \end{itemize}

\item {\bf Experimental setting/details}
    \item[] Question: Does the paper specify all the training and test details (e.g., data splits, hyperparameters, how they were chosen, type of optimizer, etc.) necessary to understand the results?
    \item[] Answer: \answerYes{} 
    \item[] Justification: We specify all the training and test details in the Sec.~\ref{sec:implement} and Appendix~\ref{app:implement}.
    \item[] Guidelines:
    \begin{itemize}
        \item The answer NA means that the paper does not include experiments.
        \item The experimental setting should be presented in the core of the paper to a level of detail that is necessary to appreciate the results and make sense of them.
        \item The full details can be provided either with the code, in appendix, or as supplemental material.
    \end{itemize}

\item {\bf Experiment statistical significance}
    \item[] Question: Does the paper report error bars suitably and correctly defined or other appropriate information about the statistical significance of the experiments?
    \item[] Answer: \answerYes{} 
    \item[] Justification: We report the standard deviation in our experiment results.
    \item[] Guidelines:
    \begin{itemize}
        \item The answer NA means that the paper does not include experiments.
        \item The authors should answer "Yes" if the results are accompanied by error bars, confidence intervals, or statistical significance tests, at least for the experiments that support the main claims of the paper.
        \item The factors of variability that the error bars are capturing should be clearly stated (for example, train/test split, initialization, random drawing of some parameter, or overall run with given experimental conditions).
        \item The method for calculating the error bars should be explained (closed form formula, call to a library function, bootstrap, etc.)
        \item The assumptions made should be given (e.g., Normally distributed errors).
        \item It should be clear whether the error bar is the standard deviation or the standard error of the mean.
        \item It is OK to report 1-sigma error bars, but one should state it. The authors should preferably report a 2-sigma error bar than state that they have a 96\% CI, if the hypothesis of Normality of errors is not verified.
        \item For asymmetric distributions, the authors should be careful not to show in tables or figures symmetric error bars that would yield results that are out of range (e.g. negative error rates).
        \item If error bars are reported in tables or plots, The authors should explain in the text how they were calculated and reference the corresponding figures or tables in the text.
    \end{itemize}

\item {\bf Experiments compute resources}
    \item[] Question: For each experiment, does the paper provide sufficient information on the computer resources (type of compute workers, memory, time of execution) needed to reproduce the experiments?
    \item[] Answer: \answerYes{} 
    \item[] Justification: Experiments were carried out on 2 GPUs (NVIDIA A100-PCIE-40GB) with IntelXeon Silver 4316 CPUs. The details are in Appendix~\ref{app:implement}.
    \item[] Guidelines:
    \begin{itemize}
        \item The answer NA means that the paper does not include experiments.
        \item The paper should indicate the type of compute workers CPU or GPU, internal cluster, or cloud provider, including relevant memory and storage.
        \item The paper should provide the amount of compute required for each of the individual experimental runs as well as estimate the total compute. 
        \item The paper should disclose whether the full research project required more compute than the experiments reported in the paper (e.g., preliminary or failed experiments that didn't make it into the paper). 
    \end{itemize}
    
\item {\bf Code of ethics}
    \item[] Question: Does the research conducted in the paper conform, in every respect, with the NeurIPS Code of Ethics \url{https://neurips.cc/public/EthicsGuidelines}?
    \item[] Answer: \answerYes{} 
    \item[] Justification: Our research conducted in the paper conform, in every respect, with the NeurIPS Code of Ethics. 
    \item[] Guidelines:
    \begin{itemize}
        \item The answer NA means that the authors have not reviewed the NeurIPS Code of Ethics.
        \item If the authors answer No, they should explain the special circumstances that require a deviation from the Code of Ethics.
        \item The authors should make sure to preserve anonymity (e.g., if there is a special consideration due to laws or regulations in their jurisdiction).
    \end{itemize}

\item {\bf Broader impacts}
    \item[] Question: Does the paper discuss both potential positive societal impacts and negative societal impacts of the work performed?
    \item[] Answer: \answerYes{} 
    \item[] Justification: We discuss the broader impacts of our method in Appendix~\ref{app:discuss}.
    \item[] Guidelines:
    \begin{itemize}
        \item The answer NA means that there is no societal impact of the work performed.
        \item If the authors answer NA or No, they should explain why their work has no societal impact or why the paper does not address societal impact.
        \item Examples of negative societal impacts include potential malicious or unintended uses (e.g., disinformation, generating fake profiles, surveillance), fairness considerations (e.g., deployment of technologies that could make decisions that unfairly impact specific groups), privacy considerations, and security considerations.
        \item The conference expects that many papers will be foundational research and not tied to particular applications, let alone deployments. However, if there is a direct path to any negative applications, the authors should point it out. For example, it is legitimate to point out that an improvement in the quality of generative models could be used to generate deepfakes for disinformation. On the other hand, it is not needed to point out that a generic algorithm for optimizing neural networks could enable people to train models that generate Deepfakes faster.
        \item The authors should consider possible harms that could arise when the technology is being used as intended and functioning correctly, harms that could arise when the technology is being used as intended but gives incorrect results, and harms following from (intentional or unintentional) misuse of the technology.
        \item If there are negative societal impacts, the authors could also discuss possible mitigation strategies (e.g., gated release of models, providing defenses in addition to attacks, mechanisms for monitoring misuse, mechanisms to monitor how a system learns from feedback over time, improving the efficiency and accessibility of ML).
    \end{itemize}
    
\item {\bf Safeguards}
    \item[] Question: Does the paper describe safeguards that have been put in place for responsible release of data or models that have a high risk for misuse (e.g., pretrained language models, image generators, or scraped datasets)?
    \item[] Answer: \answerNA{} 
    \item[] Justification: Our work does not pose such risks.
    \item[] Guidelines:
    \begin{itemize}
        \item The answer NA means that the paper poses no such risks.
        \item Released models that have a high risk for misuse or dual-use should be released with necessary safeguards to allow for controlled use of the model, for example by requiring that users adhere to usage guidelines or restrictions to access the model or implementing safety filters. 
        \item Datasets that have been scraped from the Internet could pose safety risks. The authors should describe how they avoided releasing unsafe images.
        \item We recognize that providing effective safeguards is challenging, and many papers do not require this, but we encourage authors to take this into account and make a best faith effort.
    \end{itemize}

\item {\bf Licenses for existing assets}
    \item[] Question: Are the creators or original owners of assets (e.g., code, data, models), used in the paper, properly credited and are the license and terms of use explicitly mentioned and properly respected?
    \item[] Answer: \answerYes{} 
    \item[] Justification: All the datasets used in our paper are cited accordingly.
    \item[] Guidelines:
    \begin{itemize}
        \item The answer NA means that the paper does not use existing assets.
        \item The authors should cite the original paper that produced the code package or dataset.
        \item The authors should state which version of the asset is used and, if possible, include a URL.
        \item The name of the license (e.g., CC-BY 4.0) should be included for each asset.
        \item For scraped data from a particular source (e.g., website), the copyright and terms of service of that source should be provided.
        \item If assets are released, the license, copyright information, and terms of use in the package should be provided. For popular datasets, \url{paperswithcode.com/datasets} has curated licenses for some datasets. Their licensing guide can help determine the license of a dataset.
        \item For existing datasets that are re-packaged, both the original license and the license of the derived asset (if it has changed) should be provided.
        \item If this information is not available online, the authors are encouraged to reach out to the asset's creators.
    \end{itemize}

\item {\bf New assets}
    \item[] Question: Are new assets introduced in the paper well documented and is the documentation provided alongside the assets?
    \item[] Answer: \answerNA{} 
    \item[] Justification: We do not provide any assets in our work.
    \item[] Guidelines:
    \begin{itemize}
        \item The answer NA means that the paper does not release new assets.
        \item Researchers should communicate the details of the dataset/code/model as part of their submissions via structured templates. This includes details about training, license, limitations, etc. 
        \item The paper should discuss whether and how consent was obtained from people whose asset is used.
        \item At submission time, remember to anonymize your assets (if applicable). You can either create an anonymized URL or include an anonymized zip file.
    \end{itemize}

\item {\bf Crowdsourcing and research with human subjects}
    \item[] Question: For crowdsourcing experiments and research with human subjects, does the paper include the full text of instructions given to participants and screenshots, if applicable, as well as details about compensation (if any)? 
    \item[] Answer: \answerNA{} 
    \item[] Justification: Our work does not involve crowdsourcing or research with human subjects.
    \item[] Guidelines:
    \begin{itemize}
        \item The answer NA means that the paper does not involve crowdsourcing nor research with human subjects.
        \item Including this information in the supplemental material is fine, but if the main contribution of the paper involves human subjects, then as much detail as possible should be included in the main paper. 
        \item According to the NeurIPS Code of Ethics, workers involved in data collection, curation, or other labor should be paid at least the minimum wage in the country of the data collector. 
    \end{itemize}

\item {\bf Institutional review board (IRB) approvals or equivalent for research with human subjects}
    \item[] Question: Does the paper describe potential risks incurred by study participants, whether such risks were disclosed to the subjects, and whether Institutional Review Board (IRB) approvals (or an equivalent approval/review based on the requirements of your country or institution) were obtained?
    \item[] Answer: \answerNA{}  
    \item[] Justification: Our work does not involve crowdsourcing nor research with human subjects.
    \item[] Guidelines:
    \begin{itemize}
        \item The answer NA means that the paper does not involve crowdsourcing nor research with human subjects.
        \item Depending on the country in which research is conducted, IRB approval (or equivalent) may be required for any human subjects research. If you obtained IRB approval, you should clearly state this in the paper. 
        \item We recognize that the procedures for this may vary significantly between institutions and locations, and we expect authors to adhere to the NeurIPS Code of Ethics and the guidelines for their institution. 
        \item For initial submissions, do not include any information that would break anonymity (if applicable), such as the institution conducting the review.
    \end{itemize}

\item {\bf Declaration of LLM usage}
    \item[] Question: Does the paper describe the usage of LLMs if it is an important, original, or non-standard component of the core methods in this research? Note that if the LLM is used only for writing, editing, or formatting purposes and does not impact the core methodology, scientific rigorousness, or originality of the research, declaration is not required.
    \item[] Answer: \answerYes{}
    \item[] Justification: We have clearly stated the usage of LLMs in our work. We use LLMs to generate text confounder. 
    \item[] Guidelines:
    \begin{itemize}
        \item The answer NA means that the core method development in this research does not involve LLMs as any important, original, or non-standard components.
        \item Please refer to our LLM policy (\url{https://neurips.cc/Conferences/2025/LLM}) for what should or should not be described.
    \end{itemize}

\end{enumerate}

\clearpage
\appendix

\section{Extended related work}
\label{app:rw}

\subsection{Conditional average treatment effect (CATE)}

Estimating the CATE has received a lot of attention in the machine learning literature (e.g.,\cite{chipman2010bart,curth2021nonparametric,curth2021inductive,johansson2016learning,johansson2018learning,kunzel2019metalearners,kennedy2023towards,ma2025diffusion,ma2024diffpo,ma2025foundation,nie2021quasi,shalit2017estimating,wager2018estimation}). A prominent approach to CATE estimation involves \emph{meta-learners} -- i.e., flexible strategies that decouple treatment effect estimation from the choice of base machine learning model. First systematized by \cite{kunzel2019metalearners}, these methods have since been extended through theoretical analysis \cite{kennedy2022minimax, nie2021quasi} and improved pseudo-outcome constructions \cite{kennedy2023towards}.

Existing CATE meta-learners can be categorized into: (a)~one-step (plug-in) learners (indirect meta-learners) that output two regression functions from the observational data and then compute CATE as the difference in the potential outcomes (this is the strategy underlying the S- and T-learners); and (b)~two-step learners (direct meta-learners/multi-stage direct estimators). These learners first compute nuisance functions to build a pseudo-outcome. In the second step, they obtain the CATE directly by regressing the input covariates on the pseudo-outcome. (Note that \emph{pseudo-outcomes are not potential outcomes}). In terms of (b), existing methods fall largely into three broad classes: regression adjustment (RA), propensity weighting (PW), or doubly robust (DR) strategies.

The RA-learner uses the regression-adjusted pseudo-outcome in the second step, i.e.,
\begin{equation}
    \tilde{Y}_{\text{RA}, \hat{\eta}}=A\left(Y-\hat{\mu}_0(X)\right)+(1-A)\left(\hat{\mu}_1(X)-Y\right)
\end{equation}

The PW-leaner is inspired by inverse propensity-weighted (IPW) estimators, which is associated with pseudo-outcome, i.e.,
\begin{equation}
    \tilde{Y}_{\text{PA}, \hat{\eta}}=\left(\frac{A}{\hat{\pi}(X)}-\frac{1-A}{1-\hat{\pi}(X)}\right) Y
\end{equation}

Doubly robust (DR) learners combine elements of RA and PW to mitigate their individual limitations. It is an extensions of augmented inverse probability weighting (AIPW) by constructing pseudo-outcomes using both propensity scores and outcome models. The pseudo-outcome is defined as:
\begin{equation}
    \tilde{Y}_{\text{DR}}=\left(\frac{A}{\hat{\pi}(X)}-\frac{(1-A)}{1-\hat{\pi}(X)}\right) Y+
{\left[\left(1-\frac{A}{\hat{\pi}(X)}\right) \hat{\mu}_1(X)-\left(1-\frac{1-A}{1-\hat{\pi}(X)}\right) \hat{\mu}_0(X)\right]} .
\end{equation}
It can also be written in the residual form as 
\begin{equation}
\tilde{Y}_{\text{DR}}=\hat{\mu}_1(X)-\hat{\mu}_0(X)+\frac{A \cdot\left(Y-\hat{\mu}_1(X)\right)}{\hat{\pi}(X)}-\frac{(1-A) \cdot\left(Y-\hat{\mu}_0(X)\right)}{1-\hat{\pi}(X)} .
\end{equation}
As the latter is based on the doubly-robust AIPW estimator, it is hence unbiased if either propensity score or outcome regressions are correctly specified \cite{kennedy2023towards}.

\subsection{Causal inference with text data}
\label{app:rw_CATE_text}
 
Causal inference with text data has emerged as an important research direction (e.g. \cite{veitch2020adapting, keith2020text, roberts2020adjusting, mozer2020matching, zhang2020quantifying, maiya2021causalnlp, gultchin2021operationalizing, keith2021text, daoud2022conceptualizing, zeng2022uncovering, deshpande2022deep, egami2022make, chen2024proximal}), with text variables serving multiple roles in causal frameworks.

Prior research has considered text to have various causal roles, with text serving as a treatment, mediator, outcome, or confounder. For example, previous works view text as treatment including \cite[e.g.,][]{tan2014effect, wood2018challenges, wang2019words, pryzant2021causal, maiya2021causalnlp, egami2022make}; works view text as mediator including\cite[e.g.,][]{ zhang2020quantifying, veitch2020adapting, gultchin2021operationalizing, keith2021text}); and works view text as outcome including \cite[e.g.,][]{gill2015judicial, sridhar2019estimating, koroleva2019measuring}). There are many works that have explored text as a confounder \cite[e.g.,][]{keith2020text, roberts2020adjusting, mozer2020matching, daoud2022conceptualizing, zeng2022uncovering, chen2024proximal}). \cite{keith2020text} surveyed methods leveraging text to remove confounding and challenges. Some work attempts to mitigate confounding bias by viewing text features act as proxies for unobserved confounders \cite{veitch2020adapting, roberts2020adjusting, deshpande2022deep,chen2024proximal}.  

\subsection{NLP and language models in medical decision-making}
\label{app:nlp_medicine}

Natural language processing (NLP) has emerged as an important tool in healthcare, enabling the extraction of meaningful information from unstructured clinical texts \cite[e.g.,][]{ demner2009can, chen2020trends, huang2019clinicalbert, zhu2020using, wu2020deep, salunkhe2021machine, hiremath2022enhancing, hossain2023natural,hossain2024natural}). Early work focused on using NLP to process clinical notes and symptom descriptions for some basic analysis and classification tasks, while more recent approaches leverage deep learning for learning the representations, particularly transformer-based models like BERT and GPT \cite{devlin2018bert, huang2019clinicalbert, brown2020language, dhawan2024end, lahat2024assessing}. Work in this area has focused on learning the representations; for instance,, \cite{huang2019clinicalbert} improved on previous clinical text processing methods and proposed a pretrained model on clinical notes to learn better representation for prediction. Other work used LLMs to impute the missing data in clinical notes \cite{dhawan2024end}.

Large language models (LLMs) have shown promise in medical applications, such as clinical text summarization, diagnosis, and treatment recommendations \cite{thirunavukarasu2023large, nazi2024large, spitzer2025effect, ullah2024challenges, dhawan2024end}. For example, \cite{kimmdagents} introduce a multi-agent framework that leverages LLMs to emulate the hierarchical diagnosis procedures, targeted for symptom classification and risk stratification tasks. However, unstructured text data often lacks explicit information about key confounders, which is critical for accurate causal inference. most NLP models, including LLMs, are designed for associative learning rather than causal reasoning \cite{jin2023cladder, jin2023can}. Their use in causal inference, particularly in addressing text confounding, remains underexplored.

\subsection{Counterfactual predictions under runtime confounding}
\label{app:runtime_confounding}

Our setting is also related to \cite{coston2020counterfactual}, which refers to a scenario where historical data contains all the relevant information needed for decision-making.  Only a subset of covariates $V \subseteq X$ is available to use at runtime. The unobserved set $X \setminus V $ should include all the hidden confounders at the runtime, denoted as $Z$. They show this can induce considerable bias in the resulting prediction model when the discarded features are significant confounders. However, this setting is different from ours. They require the set $V$ at runtime known and fixed. This is a strong and unrealistic assumption in our setting. Our setting allows $V$ to be an arbitrary subset of $X$. Moreover, their setting cannot handle the modality of data, which involves structured clinical tabular data at training time and only self-reported description-based text data at inference time.

\clearpage
\section{Proof}
\label{app:proof}

\subsection{Pointwise confounding bias of the na{\"i}ve baseline}
\label{app:proof-bias}

\begin{lemma}[Pointwise confounding bias of the na{\"i}ve baseline]
Under \oursetting, assume that Assumption~\ref{ass:basic} holds. Then, for any $t \in \mathcal{T}$, the na{\"i}ve estimator defined as $\tau^t_{\mathrm{naive}}(t)=\mathbb{E}[Y \mid A=1, T=t]-\mathbb{E}[Y \mid A=0, T=t]$ has pointwise bias with respect to the true conditional average treatment effect (CATE) $\tau^t(t)$, given by
\begin{equation}
    \begin{aligned}
\mathrm{bias}(t) &\;=\; \tau^t_{\mathrm{naive}}(t)-\tau^t(t) \\
&= \Bigl(\mathbb{E}\bigl[\mu_1^x(x) \mid A=1, T=t\bigr] -\mathbb{E}\bigl[\mu_1^x(x) \mid T=t\bigr]\Bigr) \\
& - \Bigl(\mathbb{E}\bigl[\mu_0^x(x) \mid A=0, T=t\bigr] -\mathbb{E}\bigl[\mu_0^x(x) \mid T=t\bigr]\Bigr),
\end{aligned}
\end{equation}
where $\mu_a^x(x)=\mathbb{E}[Y \mid X=x, A=a]$.
\end{lemma}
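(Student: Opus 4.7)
The plan is to prove this by rewriting both quantities in the bias expression as iterated expectations over the true confounder $X$, and then subtracting. The key tool is the tower property of conditional expectation, combined with the structural assumptions (unconfoundedness in Assumption~\ref{ass:basic} and the text generation mechanism in Assumption~\ref{ass:text_gen}), which together let us ``peel off'' the dependence of $Y$ or $Y(a)$ on $T$ once we have conditioned on $X$.

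First, I would tackle the observational term $\mathbb{E}[Y \mid A=a, T=t]$. By the tower property, this equals $\mathbb{E}\bigl[\mathbb{E}[Y \mid X, A=a, T=t] \mid A=a, T=t\bigr]$. The inner expectation is then simplified using consistency (so $Y = Y(a)$ on $\{A=a\}$), unconfoundedness ($Y(a) \indep A \mid X$), and the fact that $T = h(X,\epsilon)$ with $\epsilon \indep Y(a) \mid X$ (so conditioning on $T$ on top of $X$ adds no information about $Y(a)$). This collapses the inner expectation to $\mu_a^x(X)$, yielding
\begin{equation}
\mathbb{E}[Y \mid A=a, T=t] \;=\; \mathbb{E}\bigl[\mu_a^x(X) \mid A=a, T=t\bigr].
\end{equation}

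Next, I would handle the counterfactual term $\mathbb{E}[Y(a) \mid T=t]$ that appears inside $\tau^t(t)$. Again applying the tower property over $X$, and using that the noise $\epsilon$ in the text generation is independent of $Y(a)$ given $X$, the inner expectation $\mathbb{E}[Y(a) \mid X, T=t]$ reduces to $\mathbb{E}[Y(a) \mid X] = \mu_a^x(X)$ by unconfoundedness. Hence
\begin{equation}
\tau^t(t) \;=\; \mathbb{E}\bigl[\mu_1^x(X) \mid T=t\bigr] - \mathbb{E}\bigl[\mu_0^x(X) \mid T=t\bigr].
\end{equation}

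Finally, I would subtract: plugging the two expressions above into $\tau^t_{\mathrm{naive}}(t) - \tau^t(t)$ and regrouping the four terms by treatment arm gives exactly the claimed expression for $\mathrm{bias}(t)$. The main obstacle is not algebraic but conceptual: justifying that $Y(a) \indep T \mid X$ (and $Y(a) \indep T \mid X, A$) from Assumption~\ref{ass:text_gen}, since the stated independence $\epsilon \indep Y(a) \mid X$ must be combined with the measurability of $T$ in $(X,\epsilon)$ and the causal graph in Fig.~\ref{fig:causal_graph} (where $T$ does not point to $A$) to ensure that adding $T$ to the conditioning set is harmless. Once that step is cleanly stated, the rest of the proof is a direct application of the tower property.
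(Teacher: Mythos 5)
Your proposal is correct and follows essentially the same route as the paper's proof: rewrite $\mathbb{E}[Y\mid A=a,T=t]$ as $\mathbb{E}[\mu_a^x(X)\mid A=a,T=t]$ via consistency, unconfoundedness, the noise independence in the text generation mechanism, and the tower property, express $\tau^t(t)$ as $\mathbb{E}[\mu_1^x(X)-\mu_0^x(X)\mid T=t]$, and subtract. If anything, you are more explicit than the paper about the conditional-independence step $Y(a)\indep T\mid X$ that licenses dropping $T$ from the inner conditioning, which the paper invokes only implicitly.
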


\begin{proof}

We give the proof by following \cite{coston2020counterfactual}.

By the consistency (Assumption~\ref{ass:basic}(i)), we have
\begin{equation}
    \mathbb{E}[Y \mid A=a, T=t]=\mathbb{E}[Y(a) \mid A=a, T=t].
\end{equation}
By the unconfoundedness (Assumption~\ref{ass:basic}(ii)) and induced text generation mechanism (the independence of the noise $\epsilon$, i.e., $\epsilon \indep Y(a) \mid X$), and by applying  the law of iterated expectations, we obtain
\begin{equation}
    \mathbb{E}[Y(a) \mid A=a, T=t]=\mathbb{E}\bigl[\mu_a^x(x) \mid A=a, T=t\bigr].
\end{equation}
Hence, the na{\"i}ve estimator can be written as
\begin{equation}
    \tau^t_{\mathrm{naive}}(t)=\mathbb{E}\bigl[\mu_1^x(x) \mid A=1, T=t\bigr]-\mathbb{E}\bigl[\mu_0^x(x) \mid A=0, T=t\bigr].
\end{equation}
The target CATE given $T$ is defined by 
\begin{equation}
    \tau^t(t)=\mathbb{E}[Y(1)-Y(0) \mid T=t]=\mathbb{E}\bigl[\mu_1^x(x)-\mu_0^x(x) \mid T=t\bigr].
\end{equation}
Subtracting the true CATE from the na{\"i}ve estimator yields the bias
\begin{equation}
    \begin{aligned}
\mathrm{bias}(t) &= \tau^t_{\mathrm{naive}}(t)-\tau^t(t) \\
&= \Bigl(\mathbb{E}\bigl[\mu_1^x(x) \mid A=1, T=t\bigr]-\mathbb{E}\bigl[\mu_0^x(x) \mid A=0, T=t\bigr]\Bigr) - \mathbb{E}\bigl[\mu_1^x(x)-\mu_0^x(x) \mid T=t\bigr] \\
&= \Bigl(\mathbb{E}\bigl[\mu_1^x(x) \mid A=1, T=t\bigr]-\mathbb{E}\bigl[\mu_1^x(x) \mid T=t\bigr]\Bigr)\\ 
& - \Bigl(\mathbb{E}\bigl[\mu_0^x(x) \mid A=0, T=t\bigr]-\mathbb{E}\bigl[\mu_0^x(x) \mid T=t\bigr]\Bigr).
\end{aligned}
\end{equation}
\end{proof}

\begin{remark}[Non-zero bias of the na{\"i}ve estimator] The bias of the na{\"i}ve estimator remains non-zero under Assumption~\ref{ass:basic} and $Y(0), Y(1) \not\perp A \mid T$. 
\end{remark}
In the context of \oursetting, when $T$ does not capture all the necessary confounding information in $X$ for potential outcome, the na{\"i}ve estimator is necessarily biased, as 
\begin{equation}
    \mathbb{E}\bigl[\mu_a^x(X) \mid A=a, T=t\bigr] \neq \mathbb{E}\bigl[\mu_a^x(X) \mid T=t\bigr] .
\end{equation}

\clearpage

\subsection{Identifiablity of $\tau^t(t)$ by adjusting for inference time text confounding}
\label{app:proof-identi_cate}

\begin{lemma}[Identifiablity of $\tau^t(t)$]
\label{lemma:app_identi_cate_t}
For any $t \in \mathcal{T}$, under Assumption~\ref{ass:text_gen}, $\tau^t(t)$ can be identified through $\tau^x(x)$ via
\begin{equation}
    {\tau^t(t)}=\mathbb{E}\left[{\tau^x(X)} \mid T=t\right].
\end{equation}
\end{lemma}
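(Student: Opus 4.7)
The plan is to derive the identity by conditioning on the true confounder $X$ inside the outer expectation over $T$, then using Assumption~\ref{ass:text_gen} to strip $T$ out of the inner conditional expectation of the potential outcomes. Concretely, I would start from the definition $\tau^t(t) = \mathbb{E}[Y(1)-Y(0)\mid T=t]$ and apply the tower property to write
\begin{equation}
\tau^t(t) \;=\; \mathbb{E}\!\left[\,\mathbb{E}\!\left[Y(1)-Y(0)\;\middle|\;X,T\right]\;\middle|\;T=t\right].
\end{equation}
If I can show that $\mathbb{E}[Y(a)\mid X,T] = \mathbb{E}[Y(a)\mid X] = \mu_a^x(X)$ almost surely for $a\in\{0,1\}$, then the inner expectation collapses to $\tau^x(X)$ and the claim follows immediately.

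The main step, and what I expect to be the core of the argument, is justifying this collapse. Assumption~\ref{ass:text_gen} says $T = h(X,\epsilon)$ with $\epsilon \indep Y(a)\mid X$. Since $T$ is a measurable function of $(X,\epsilon)$, conditional on $X$ the randomness in $T$ is driven entirely by $\epsilon$, which is conditionally independent of the potential outcomes given $X$. I would formalize this by noting that for any measurable set $B\subseteq\mathcal{T}$, $\{T\in B\} = \{(X,\epsilon)\in h^{-1}(B)\}$, so conditional on $X=x$ the event $\{T\in B\}$ depends only on $\epsilon$; combined with $\epsilon \indep Y(a)\mid X$ this yields $Y(a) \indep T \mid X$. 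Consequently, $\mathbb{E}[Y(a)\mid X,T] = \mathbb{E}[Y(a)\mid X] = \mu_a^x(X)$.

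Plugging this back into the tower-property expression gives
\begin{equation}
\tau^t(t) \;=\; \mathbb{E}\!\left[\mu_1^x(X) - \mu_0^x(X)\;\middle|\;T=t\right] \;=\; \mathbb{E}\!\left[\tau^x(X)\mid T=t\right],
\end{equation}
which is the claimed identity. The only technical subtlety is the measurability argument used to derive $Y(a)\indep T\mid X$ from the stated noise condition; this is routine but worth stating explicitly because it is what makes the text-generation mechanism in Assumption~\ref{ass:text_gen} exactly strong enough to convert an \emph{unidentifiable} target estimand over $T$ into an \emph{identifiable} one by first estimating $\tau^x$ and then projecting onto $T$.
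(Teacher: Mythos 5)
Your proof is correct and follows essentially the same route as the paper's: apply the tower property conditioning on $X$, then use Assumption~\ref{ass:text_gen} to argue that $T$ carries no additional information about the potential outcomes given $X$, so the inner expectation collapses to $\tau^x(X)$. Your explicit measurability argument deriving $Y(a)\indep T\mid X$ from $T=h(X,\epsilon)$ and $\epsilon\indep Y(a)\mid X$ is a welcome bit of extra rigor that the paper's proof only asserts implicitly.
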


\begin{proof}
By definition, the CATE with respect to the induced text confounder is
\begin{equation}
    \tau^t(t) = \mathbb{E}[Y(1)-Y(0) \mid T=t].
\end{equation}
Applying the law of iterated expectations, we can condition on the true confounder $X$:
\begin{equation}
    \tau^t(t) = \mathbb{E}[\mathbb{E}[Y(1)-Y(0) \mid T=t, X] \mid T=t].
\end{equation}
Under Assumption~\ref{ass:text_gen}, conditional on $X$, the variable $T$ does not provide any additional information about the potential outcomes, we have
\begin{equation}
    \mathbb{E}[Y(1)-Y(0) \mid X, T=t] = \mathbb{E}[Y(1)-Y(0) \mid X] = \tau^x(x).
\end{equation}
Thus, we obtain
\begin{equation}
    \tau^t(t) = \mathbb{E}\Big[\tau^x(x) \mid T=t\Big].
\end{equation}

\end{proof}

\clearpage

\subsection{Double robustness property of the estimator}
\label{app:proof-dr}

\begin{corollary}[Double robustness property of the estimator]
\label{corollary:app-double-robust}
The estimator satisfies the double robustness property by construction: $\hat{\tau}^t(t)$ converges to the true $\tau^t(t)$ if either (i)~the response surface estimates $\hat{\mu}_a^x$ are consistent, or (ii) the propensity score estimate $\hat{\pi}^x$ is consistent. This ensures validity even under potential model misspecification.
\end{corollary}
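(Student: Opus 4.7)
The plan is to combine the classical doubly robust identity for the AIPW-style pseudo-outcome $\tilde{Y}$ with the identifiability result for $\tau^t(t)$ established in Lemma~\ref{lemma:identi_cate_t}. Since $\hat{\tau}^t(t)=f_\theta(\phi(t))$ is obtained by least-squares regression of $\tilde{Y}$ on the text representation $\phi(\tilde{T})$, its population-limit target is the conditional expectation $\mathbb{E}[\tilde{Y}\mid T=t]$. It therefore suffices to show that, under either consistency condition, this conditional expectation equals $\tau^t(t)$.

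First, I would invoke the standard DR identity for the AIPW pseudo-outcome constructed in Stage~\circled{1}: under Assumption~\ref{ass:basic},
\begin{equation*}
\mathbb{E}[\tilde{Y}\mid X=x]=\mu_1^x(x)-\mu_0^x(x)=\tau^x(x)
\end{equation*}
whenever either $\hat{\mu}_a^x$ converges to $\mu_a^x$ or $\hat{\pi}^x$ converges to $\pi^x$. The algebraic check is routine: using $\mathbb{E}[Y\mid X,A=a]=\mu_a^x(X)$ under unconfoundedness, the residual correction terms vanish in expectation either because $\hat{\mu}_a^x=\mu_a^x$ (case i), or because the propensity ratios $\pi^x/\hat{\pi}^x$ and $(1-\pi^x)/(1-\hat{\pi}^x)$ reduce to unity (case ii); the detailed calculation is standard (see~\cite{kennedy2023towards}).

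Second, I would combine this DR identity with the tower property and Assumption~\ref{ass:text_gen}. Since $T=h(X,\epsilon)$ with $\epsilon$ independent of the potential outcomes given $X$, the surrogate $T$ contributes no information beyond $X$ to any function of $(X,A,Y)$, so $\mathbb{E}[\tilde{Y}\mid X,T]=\mathbb{E}[\tilde{Y}\mid X]$ at the population level. Applying the tower property then yields
\begin{equation*}
\mathbb{E}[\tilde{Y}\mid T=t]=\mathbb{E}\bigl[\mathbb{E}[\tilde{Y}\mid X]\bigm|T=t\bigr]=\mathbb{E}\bigl[\tau^x(X)\bigm|T=t\bigr]=\tau^t(t),
\end{equation*}
where the final equality is Lemma~\ref{lemma:identi_cate_t}. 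Since $f_\theta$ is trained to minimise squared error against $\tilde{Y}$ conditional on $\phi(T)$, in a sufficiently expressive, large-sample regime $f_\theta(\phi(t))$ converges to this conditional expectation, establishing $\hat{\tau}^t(t)\to\tau^t(t)$.

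The main obstacle is conceptual rather than computational: the classical DR guarantee controls the pseudo-outcome's mean conditional on $X$, yet the second-stage regression conditions on the text embedding $\phi(T)$ instead. The crucial bridge is Assumption~\ref{ass:text_gen}, which enforces that the auxiliary randomness in text generation is uninformative about potential outcomes; without this exogeneity, the second-stage regression could inherit additional bias even when the first-stage pseudo-outcome is doubly robust. A secondary subtlety worth flagging is that the corollary implicitly assumes the representation $\phi$ together with the regression class used for $f_\theta$ is rich enough to realise $t\mapsto\mathbb{E}[\tau^x(X)\mid T=t]$ in the limit; otherwise approximation error would contaminate the final estimator regardless of which nuisance is correctly specified.
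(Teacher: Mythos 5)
Your proposal is correct and, on the part the paper actually proves, follows the same route: the paper's appendix proof establishes exactly the AIPW identity $\mathbb{E}[\tilde{Y}\mid X=x]=\tau^x(x)$ by the standard two-case argument (Case 1: correct outcome models, so the residual terms have zero conditional mean; Case 2: correct propensity, so the error terms $\delta_a(x)$ cancel after taking conditional expectations), which is the "routine algebraic check" you delegate to \cite{kennedy2023towards}. Where you differ is that you go further than the paper: its proof stops at the $X$-conditional identity and never explicitly bridges to the actual claim about $\hat{\tau}^t(t)$ and $\tau^t(t)$. Your second step --- using $\mathbb{E}[\tilde{Y}\mid X,T]=\mathbb{E}[\tilde{Y}\mid X]$, the tower property, and Lemma~\ref{lemma:identi_cate_t} to get $\mathbb{E}[\tilde{Y}\mid T=t]=\tau^t(t)$, plus the observation that the least-squares regression targets this conditional expectation --- is precisely the missing link, so your write-up is the more complete argument. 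One caveat worth making explicit: the step $\mathbb{E}[\tilde{Y}\mid X,T]=\mathbb{E}[\tilde{Y}\mid X]$ needs $T\perp(A,Y)\mid X$, i.e.\ the text noise $\epsilon$ must be exogenous to the treatment and the observed outcome given $X$, whereas Assumption~\ref{ass:text_gen} as literally stated only asserts $\epsilon\indep Y(a)\mid X$. This mild strengthening is consistent with the causal graph and with how the paper itself uses the assumption in the proof of Lemma~\ref{lemma:identi_cate_t}, and you correctly flag it as the conceptual crux; your remark about the expressiveness of $\phi$ and $f_\theta$ is likewise a real implicit hypothesis of the corollary that the paper leaves unstated.
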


Here, we give the proof of the Corollary~\ref{corollary:app-double-robust}, which states that our estimator satisfies the double robustness property. Formally, we have the response functions defined as $\mu_a^x(x) = \mathbb{E}[Y \mid X=x, A=a]$, and $\pi^x(x)=\mathbb{P}(A=1\mid X=x)$ as the true propensity score. Then, the true conditional average treatment effect (CATE) is
\begin{equation}
\tau^x(x) = \mu_1^x(x)-\mu_0^x(x).
\end{equation}
Our doubly robust pseudo-outcome is given as 
\begin{equation}
\tilde{Y} = \hat{\mu}_1^x(x)-\hat{\mu}_0^x(x) + \frac{A}{\hat{\pi}^x(x)}\Bigl(Y-\hat{\mu}_1^x(x)\Bigr) - \frac{1-A}{1-\hat{\pi}^x(x)}\Bigl(Y-\hat{\mu}_0^x(x)\Bigr),
\end{equation}
where $\hat{\mu}_a^x(x)$ and $\hat{\pi}^x(x)$ are estimators of $\mu_a^x(x)$ and $\pi^x(x)$, respectively.

We now show that, under the Assumption~\ref{ass:basic}, if either
\begin{enumerate}
    \item the outcome models are correctly specified, i.e., $\hat{\mu}_a^x(x)=\mu_a^x(x)$ for $a\in\{0,1\}$, or
    \item the propensity score model is correctly specified, i.e., $\hat{\pi}^x(x)=\pi^x(x)$,
\end{enumerate}
it follows that
\begin{equation}
\mathbb{E}\left[\tilde{Y}\mid X=x\right] = \tau^x(x).
\end{equation}

\begin{proof}
We give the proof following \cite{kennedy2023towards}. We treat the two cases separately.

\textbf{Case 1: Correct outcome models.} 

Assume that $\hat{\mu}_a^x(x)=\mu_a^x(x)$ for $a\in\{0,1\}$. Then, the pseudo-outcome reduces to
\begin{equation}
    \tilde{Y} = \mu_1^x(x)-\mu_0^x(x) + \frac{A}{\hat{\pi}^x(x)}\Bigl(Y-\mu_1^x(x)\Bigr) - \frac{1-A}{1-\hat{\pi}^x(x)}\Bigl(Y-\mu_0^x(x)\Bigr).
\end{equation}
Taking the expectation conditional on $X=x$ gives
\begin{equation}
    \begin{aligned}
\mathbb{E}\left[\tilde{Y}\mid X=x\right] &= \mu_1^x(x)-\mu_0^x(x) \\
&\quad + \mathbb{E}\left[\frac{A}{\hat{\pi}^x(x)}\Bigl(Y-\mu_1^x(x)\Bigr) \,\Big|\, X=x\right] \\
&\quad - \mathbb{E}\left[\frac{1-A}{1-\hat{\pi}^x(x)}\Bigl(Y-\mu_0^x(x)\Bigr) \,\Big|\, X=x\right].
\end{aligned}
\end{equation}
We have
\begin{equation}
    \mathbb{E}\left[Y-\mu_1^x(x) \mid X=x, A=1\right]=0 \quad \text{and} \quad \mathbb{E}\left[Y-\mu_0^x(x) \mid X=x, A=0\right]=0,
\end{equation}
and, as both expectation terms vanish, yielding
\begin{equation}
\mathbb{E}\left[\tilde{Y}\mid X=x\right] = \mu_1^x(x)-\mu_0^x(x) = \tau^x(x).
\end{equation}

\textbf{Case 2: Correct propensity score model.} Now, assume that $\hat{\pi}^x(x)=\pi^x(x)$, while allowing for misspecification of the outcome models. We define the errors
\begin{equation}
\delta_a(x)=\hat{\mu}_a^x(x)-\mu_a^x(x), \quad a\in\{0,1\},
\end{equation}
so that $\hat{\mu}_a^x(x)=\mu_a^x(x)+\delta_a(x)$. Then, the pseudo-outcome can be written as
\begin{equation}
    \begin{aligned}
\tilde{Y} &= \Bigl[\mu_1^x(x)+\delta_1(x)\Bigr]-\Bigl[\mu_0^x(x)+\delta_0(x)\Bigr] \\
&\quad + \frac{A}{\pi^x(x)}\Bigl(Y-\mu_1^x(x)-\delta_1(x)\Bigr) \\
&\quad - \frac{1-A}{1-\pi^x(x)}\Bigl(Y-\mu_0^x(x)-\delta_0(x)\Bigr).
\end{aligned}
\end{equation}
Taking the conditional expectation given $X=x$, we have
\begin{equation}
    \begin{aligned}
\mathbb{E}\left[\tilde{Y}\mid X=x\right] &= \mu_1^x(x)-\mu_0^x(x)+\delta_1(x)-\delta_0(x) \\
&\quad + \frac{1}{\pi^x(x)}\,\mathbb{E}\Bigl[A\Bigl(Y-\mu_1^x(x)-\delta_1(x)\Bigr)\mid X=x\Bigr] \\
&\quad - \frac{1}{1-\pi^x(x)}\,\mathbb{E}\Bigl[(1-A)\Bigl(Y-\mu_0^x(x)-\delta_0(x)\Bigr)\mid X=x\Bigr].
\end{aligned}
\end{equation}
Noting that $\mathbb{E}[Y\mid X=x, A=a] = \mu_a^x(x)$, it follows that
\begin{equation}
    \mathbb{E}\Bigl[A\Bigl(Y-\mu_1^x(x)\Bigr)\mid X=x\Bigr]=0 \quad \text{and} \quad \mathbb{E}\Bigl[(1-A)\Bigl(Y-\mu_0^x(x)\Bigr)\mid X=x\Bigr]=0.
\end{equation}
Furthermore, by linearity,
\begin{equation}
    \mathbb{E}\Bigl[A\,\delta_1(x)\mid X=x\Bigr] = \delta_1(x)\,\pi^x(x), \quad \mathbb{E}\Bigl[(1-A)\,\delta_0(x)\mid X=x\Bigr] = \delta_0(x)\,(1-\pi^x(x)).
\end{equation}
Thus, we have
\begin{equation}
    \frac{1}{\pi^x(x)}\,\mathbb{E}\Bigl[A\Bigl(Y-\mu_1^x(x)-\delta_1(x)\Bigr)\mid X=x\Bigr] = -\delta_1(x),
\end{equation}
and
\begin{equation}
    -\frac{1}{1-\pi^x(x)}\,\mathbb{E}\Bigl[(1-A)\Bigl(Y-\mu_0^x(x)-\delta_0(x)\Bigr)\mid X=x\Bigr] = \delta_0(x).
\end{equation}
Substituting these back, we obtain

\begin{equation}
    \begin{aligned}
\mathbb{E}\left[\tilde{Y}\mid X=x\right] &= \mu_1^x(x)-\mu_0^x(x)+\delta_1(x)-\delta_0(x) -\delta_1(x)+\delta_0(x)\\
&=\mu_1^x(x)-\mu_0^x(x)\\
&=\tau^x(x).
    \end{aligned}
\end{equation}

\end{proof}

\clearpage
\section{Implementation details}
\label{app:implement}
\textbf{Text generation:} Given structured clinical confounders $X$, we generate the induced text confounder $\tilde{T}$ using GPT-4o mini through the OpenAI API. For each patient, we construct prompts by templating the key features of $X$ into natural language constraints. Generated texts are approximately $150-200$ tokens.

\textbf{Text embedding:} We convert generated text $\tilde{T}$ into dense vectors using pretrained BERT (bert-base-uncased) from HuggingFace Transformers. For each narrative, we compute token embeddings from the final transformer layer and apply mean pooling across tokens, yielding fixed-dimensional representations $\phi\left(\tilde{t}_i\right) \in \mathbb{R}^{768}$.

\textbf{Model architecture and training:} Propensity scores $\hat{\pi}^x(x)$ are estimated via logistic regression
The CATE predictor is a 3-layer MLP with ReLU activation, batch normalization, and 0.3 dropout. We use Adam optimizer with learning rate $5e-5$. Models are trained for $100$ epochs on IST and $150$ on MIMIC-III with a batch size of $512$. We apply label smoothing $\alpha = 0.1$ and gradient clipping (max norm=$1.0$)

\textbf{Reported time:} Experiments were carried out on 2 GPUs (NVIDIA A100-PCIE-40GB) with Intel Xeon Silver 4316 CPUs. Constructing each dataset took approximately 50 hours. Training our \frameworkshort takes around 10 minutes on average, comparable to the baseline training times.

\textbf{Baselines:} We follow the implementation from \url{https://github.com/AliciaCurth/CATENets/tree/main} for most of the CATE estimators, including S-Net \cite{kunzel2019metalearners}, T-Net \cite{kunzel2019metalearners}, TARNet \cite{shalit2017estimating}, CFRNet \cite{shalit2017estimating}. Regarding the proxy-based methods \cite{veitch2020adapting, chen2024proximal}, we follow the implementation from \url{https://github.com/rpryzant/causal-bert-pytorch} and the implementation from \url{https://github.com/jacobmchen/proximal_w_text}.

\newpage

\section{Dataset}
\label{app:all_datasets}

\subsection{The International Stroke Trial database}
The International Stroke Trial (IST) \cite{sandercock2011international} was conducted between 1991 and 1996. It is one of the largest randomized controlled trials in acute stroke treatment. The dataset comprises $19,435$ patients from $467$ hospitals across $36$ countries, enrolled within $48$ hours of stroke onset. The treatment assignments include (trial aspirin allocation and trial heparin allocation with levels for medium dose, low dose, and no heparin). The outcome is whether a patient will experience stroke again. Covariates include age, sex, presence of atrial fibrillation, systolic blood pressure, infarct visibility on CT, prior heparin use, prior aspirin use, and recorded deficits of different body parts.

\subsection{MIMIC-III dataset}

The Medical Information Mart for Intensive Care (MIMIC-III) \cite{johnson2016mimic} is a large, single-center database comprising information relating to patients admitted to critical care units at a large tertiary care hospital. MIMIC-III contains $38,597$ distinct adult patients. We follow the standardized preprocessing pipeline \cite{wang2020mimic} of the MIMIC-III dataset. We use demographic variables such as gender and age, along with clinical variables including vital signs and laboratory test results upon admission, as confounders. These include glucose, hematocrit, creatinine, sodium, blood urea nitrogen, hemoglobin, heart rate, mean blood pressure, platelets, respiratory rate, bicarbonate, red blood cell count, and anion gap, among others. The binary treatment is mechanical ventilation. The outcome is the number of days a patient needs to stay in the hospital. 

We follow previous work \cite{kunzel2019metalearners, shalit2017estimating, curth2021nonparametric, curth2021inductive,kennedy2023towards} to simulate treatment and outcome as follows
\begin{equation}
    \begin{cases}
        A\sim \mathrm{Bernoulli}(0.5), \\ 
        Y= \sigma (\beta^T X+\gamma_1 A+\gamma_2 A^2+\gamma_3 \sin (A))+\left(\delta^T X\right) A +\epsilon, 
    \end{cases}
\end{equation}
where $\beta$, $\delta$, $\gamma_1$, $\gamma_2$, and $\gamma_3$ are fixed coefficients; $\sigma$ refers to the sigmoid function $\sigma(z)=\frac{1}{1+e^{-z}}$ and $\epsilon$ is a noise term.

\subsection{Experiments with varying confounder strengths}
\label{sec:varying_conf_strength}

To evaluate the performance of \frameworkshort under varying confounder strengths, we modify the data-generating process by introducing scaling parameters that control the influence of the confounders $X$ on the outcome $Y$ and/or the treatment assignment $A$.

Specifically, for the outcome model, we use a scaling parameter $\eta \geq 0$,  which controls the strength of the confounding effect via the interaction term $(\delta^\top X) A$. Setting $\eta = 0$ corresponds to no confounding from this term, whereas larger values of $\eta$ induce stronger confounding. Alternatively, to also vary the confounding in the treatment assignment, we can generate $A$ via a logistic model 
\begin{equation}
    \Pr(A=1\mid X) = \sigma\left(\kappa\, \xi^\top X\right),
    \label{eq:treatment_model}
\end{equation}
where $\kappa \geq 0$ regulates the influence of $X$ on $A$, $\xi$ is a parameter vector, and $\sigma(z)=\frac{1}{1+e^{-z}}$ denotes the sigmoid function.

In our experiments, we vary $\eta$ (and/or $\kappa$) over a predetermined grid (e.g., $\eta \in \{0.5, 1.0, 1.5\}$) to simulate different levels of confounder strength. As the confounding strength increases, we expect that the na{\"i}ve text-based estimator (which relies solely on the generated text surrogate $\tilde{T}$ and assumes unconfoundedness) will incur increasing bias due to unadjusted residual confounding. In contrast, our \frameworkshort framework leverages the full confounder $X$ for robust nuisance function estimation and employs a doubly-robust regression that conditions on $\tilde{T}$. Therefore, our method should remain robust, which should thus widen the performance gap in favor of \frameworkshort as the confounder strength increases.

\emph{Prediction performance under varying confounder strength:} When the true confounder strength is low, both our \frameworkshort framework and the na{\"i}ve text-based estimators are expected to yield comparable CATE estimates since residual confounding is minimal. However, as the confounding strength increases, the na{\"i}ve estimators which rely solely on the generated text surrogate $\tilde{T}$ and assume unconfoundedness will incur greater bias due to unadjusted residual confounding. In contrast, our method leverages the full confounder $X$ for robust nuisance function estimation and employs a doubly-robust regression that conditions on $\tilde{T}$. As a result, \frameworkshort exhibits significantly improved performance under strong confounding, leading to an increasing performance gap between our method and the na{\"i}ve baselines as confounder strength increases.

\subsection{Experiments with varying prompt strategies}
\label{app:ablation}

We test three prompt families to generate $\tilde{T}$.

\textbf{(i)~Factual prompt:}

\begin{tcolorbox}[
    colback=gray!10,
    colframe=gray!60!black,
    boxrule=0.8pt,
    arc=4pt,
    outer arc=4pt,
    width=\textwidth,
    title=\textbf{Template example:}]
\texttt{Transfer this patient information into a paragraph of text. We have the patient clinical information including gender, age, glucose, hematocrit, creatinine, sodium, blood urea nitrogen, hemoglobin, heart rate, mean blood pressure, platelets, respiratory rate, bicarbonate, red blood cell count, and anion gap. The values are M,41,129.0,45.2,1.2,140.0,16.0,15.4,84.0,103.4,258.0,12.0,30.0,5.46,13.0.}
\end{tcolorbox}

\begin{tcolorbox}[
    colback=gray!10,
    colframe=gray!60!black,
    boxrule=0.8pt,
    arc=4pt,
    outer arc=4pt,
    width=\textwidth,
    title=\textbf{Example output:}]

\texttt{The patient is a 41-year-old male with a glucose level of 129.0 mg/dL. His hematocrit is 45.2\%, and creatinine is measured at 1.2 mg/dL. The sodium level is 140.0 mEq/L, while the blood urea nitrogen is 16.0 mg/dL. He has a hemoglobin level of 15.4 g/dL, a heart rate of 84.0 beats per minute, and a mean blood pressure of 103.4 mmHg. The patient’s platelet count is 258.0 x 10$^9/$L, with a respiratory rate of 12.0 breaths per minute. His bicarbonate level is 30.0 mEq/L, and his red blood cell count is 5.46 million cells per $\mu$L , resulting in an anion gap of 13.0 mEq/L.}

\end{tcolorbox}

\textbf{(ii)~Narrative prompt:}

\begin{tcolorbox}[
    colback=gray!10,
    colframe=gray!60!black,
    boxrule=0.8pt,
    arc=4pt,
    outer arc=4pt,
    width=\textwidth,
    title=\textbf{Template example:}]

\texttt{Write a detailed clinical narrative for a patient with these features: gender, age, glucose, hematocrit, creatinine, sodium, blood urea nitrogen, hemoglobin, heart rate, mean blood pressure, platelets, respiratory rate, bicarbonate, red blood cell count, and anion gap. The values are F,65,214.0,29.55,11.7,138.0,62.0,9.8,83.0,77.0,315.0,13.0,15.0,3.44,23.0.}

\end{tcolorbox}

\begin{tcolorbox}[
    colback=gray!10,
    colframe=gray!60!black,
    boxrule=0.8pt,
    arc=4pt,
    outer arc=4pt,
    width=\textwidth,
    title=\textbf{Example output:}]

\texttt{The patient, a 65-year-old female, presents with significant hyperglycemia, indicated by a glucose level of 214.0 mg/dL, which suggests poor blood sugar control. Other vital signs reflect some abnormalities; her hematocrit is elevated at 29.55\%, indicating possible dehydration or underlying blood issues. The creatinine level at 11.7 mg/dL is concerning for potential kidney dysfunction, while sodium is within the normal range at 138.0 mmol/L. The blood urea nitrogen is raised at 62.0 mg/dL, which might further point to renal impairment or dehydration. Her heart rate is slightly elevated at 83.0 beats per minute, and mean blood pressure is on the lower side at 77.0 mmHg, which could suggest compromised cardiovascular status or shock. The platelet count is high at 315.0 x10$^9/$L, and respiratory rate and bicarbonate levels appear stable. Overall, the patient is likely feeling unwell, with symptoms potentially related to high blood sugar levels and possible kidney issues, and she may require further evaluation and management to address these concerning lab results. Access to healthcare facilities for comprehensive diagnostic testing and treatment is essential at this stage.}

\end{tcolorbox}

\textbf{(iii) Symptom-focused prompt:}

\begin{tcolorbox}[
    colback=gray!10,
    colframe=gray!60!black,
    boxrule=0.8pt,
    arc=4pt,
    outer arc=4pt,
    width=\textwidth,
    title=\textbf{Template example:}]

\texttt{Here is a patient current information (if this patient can use diagnostic equipment or has access to healthcare facilities): The variables include gender, age, glucose, hematocrit, creatinine, sodium, blood urea nitrogen, hemoglobin, heart rate, mean blood pressure, platelets, respiratory rate, bicarbonate, red blood cell count, and anion gap. The values are M,76,198.56,27.03,2.6,142.67,44.2,8.72,143.0,101.56,233.6,16.0,16.33,\\
2.88,20.67. How would this patient describe his/her symptoms via text with just partial diagnostic measurements? }
\end{tcolorbox}

\begin{tcolorbox}[
    colback=gray!10,
    colframe=gray!60!black,
    boxrule=0.8pt,
    arc=4pt,
    outer arc=4pt,
    width=\textwidth,
    title=\textbf{Example output:}]

\texttt{I've been feeling really unwell lately. I'm a 76-year-old man, and I've noticed I've been more tired than usual, with a lack of energy that makes it difficult to engage in my daily activities. There’s this constant feeling of thirst that I can’t shake off, and I seem to be going to the bathroom more often. My heart feels like it's racing at times, and I get short of breath, especially when I try to move around or do any little tasks. Sometimes, I experience a strange sensation in my head, almost like dizziness or confusion. I’ve also noticed some swelling in my legs. It worries me, and I feel frustrated not being able to enjoy my usual routine or activities. I just want to feel better and get back to my normal life.}
\end{tcolorbox}

\textbf{$\Rightarrow$ Expected performance differences between prompting technqiues:} Since our method is designed to correct for the residual confounding by using the full confounders $X$, it is inherently more robust to variations in prompt quality. In contrast, the na{\"i}ve baseline directly relies on $\tilde{T}$ for adjustment and therefore is more sensitive to the quality and informativeness of the text. With low-quality prompts (i.e., the factual prompt), both methods might perform similarly poorly, though our method is still expected to have an advantage due to its use of $X$. With high-quality prompts (i.e., the narrative or symptom-focused), the performance of our method is expected to improve substantially because it can fully exploit the richer text representations to accurately map onto the treatment effects. Meanwhile, the na{\"i}ve baseline remains biased due to its inability to adjust for the missing confounder components from $X$.

\clearpage
\section{Additional acknowledge}
\label{app:acknow_mimic}

We have access to ChatGPT Edu through our university, which is SOC 2 Type II compliant (\url{https://openai.com/index/introducing-chatgpt-edu/}). This gives us enhanced privacy controls not available in the standard API, free version, or even ChatGPT Team.

As part of this setup, we have applied for a Business Associate Agreement (BAA) (\url{https://help.openai.com/en/articles/8660679-how-can-i-get-a-business-associate-agreement-baa-with-openai-for-the-api-services}) to gain access to Zero Data Retention (ZDR). With ZDR enabled, the \texttt{store} parameter for \texttt{/v1/responses} and \texttt{v1/chat/completions} will always be treated as \texttt{false} \url{https://platform.openai.com/docs/guides/your-data#zero-data-retention}. In this setup, OpenAI does not retain any request or response data after processing, meaning there is no storage, no logging, and, more importantly, no human review.
This configuration provides a privacy approach that is functionally equivalent to the one recommended by PhysioNet \url{https://physionet.org/news/post/gpt-responsible-use}, where Azure OpenAI is used with human review explicitly disabled. We confirm that our analysis is thus HIPAA-compliant and thus in line with the PhysioNet rules.

\clearpage
\section{Further discussion} 
\label{app:discuss}

Our framework leverages LLM-derived text to capture an induced text confounder during training. Hence, we acknowledge risks from LLM use, such as biases or data misrepresentation. We thus advocate for a rigorous validation of LLM outputs with domain experts to ensure reliability and safety, particularly in high-stakes contexts in medicine. Further, LLMs should be used transparently and ethically \cite{feuerriegel2025using}, with safeguards against amplifying societal biases. Future work should explore integrating uncertainty quantification to further increase trust in our \frameworkshort during real-world deployment.

Technically, our method relies on the fact that LLMs generate faithful representations, meaning that generated text $\tilde{T}$ and text embedding preserve semantic relationships. Importantly, such challenges are not unique to our method; any baseline relying on text generation or embeddings would similarly be impacted by low-quality text or suboptimal embeddings. This highlights the broader importance of leveraging robust LLMs and embedding techniques in text-based causal inference tasks.

Notwithstanding, our proposed framework has also benefits for making reliable inferences. By addressing \oursetting, our framework focuses on an important but overlooked setting where the true text confounders are missing at inference time. Such a setting is common in telemedicine (e.g., when LLMs are used for making treatment recommendations where access to physicians is often limited or where access to diagnostic infrastructure is lacking). Here, our framework can help reduce bias in CATE estimation and can thus help improve personalized medicine.

\end{document}